\newtheorem{definition}{Definition}
\newtheorem{theorem}{Teorem}
\newtheorem{lemma}{Lemma}
\newtheorem{proposition}{Proposition}
\title{Differentially Private Analysis of Outliers}
\author{
Rina Okada, Kazuto Fukuchi, Kazuya Kakizaki\\
\texttt{\{rina, kazuto, kazuya\}@mdl.cs.tsukuba.ac.jp}\\
Graduate School of SIE, University of Tsukuba\\
\and
Jun Sakuma\\
\texttt{jun@cs.tsukuba.ac.jp}\\
Graduate School of SIE, University of Tsukuba / JST CREST
}
\def\@maketitle{%
 \newpage
 \null
 \vskip 2em%
 \begin{center}%
 \let \footnote \thanks
   {\LARGE \@title \par}%
   \vskip 1.5em%
   {\large
     \begin{tabular}[t]{c}%
       \@author
     \end{tabular}\par}%
 \end{center}%
 \par
 \vskip 1.5em}
\begin{document}
\maketitle

\abstract{
This paper investigates differentially private analysis of distance-based outliers. 
The problem of outlier detection is to find a small number of instances that are apparently distant from the remaining instances. On the other hand, the objective of differential privacy is to conceal presence (or absence) of any particular instance. Outlier detection and privacy protection are thus intrinsically conflicting tasks. 
In this paper, instead of reporting outliers detected, we present two types of differentially private queries that help to understand behavior of outliers.
One is the query to count outliers, which reports the number of outliers that appear in a given subspace. Our formal analysis on the exact global sensitivity of outlier counts reveals that regular global sensitivity based method can make the outputs too noisy, particularly when the dimensionality of the given subspace is high. Noting that the counts of outliers are typically expected to be relatively small compared to the number of data, we introduce a mechanism based on the smooth upper bound of the local sensitivity. 
The other is the query to discovery top-$h$ subspaces containing a large number of outliers. This task can be naively achieved by issuing count queries to each subspace in turn. However, the variation of subspaces can grow exponentially in the data dimensionality. This can cause serious consumption of the privacy budget. For this task, we propose an exponential mechanism with a customized score function for subspace discovery.
To the best of our knowledge, this study is the first trial to ensure differential privacy for distance-based outlier analysis. We demonstrated our methods with synthesized datasets and real datasets. The experimental results show that out method achieve better utility compared to the global sensitivity based methods.
}\\
{\bf Keywords: }
Differential privacy, Outlier detection, Smooth sensitivity and Exponential mechanism

\section{Introduction}
Machine learning and data mining technologies are now becoming increasingly influential in our daily life. When data mining is processed over personal data collected from individuals, the acquired knowledge might be used to infer private information. Differential privacy is a recent notion of privacy tailored to the problem of releasing statistical information~\cite{Dwork:2006:CNS:2180286.2180305}.  Differential privacy for statistical queries of various types, such as average, sum, variance, histogram, median, and maximum likelihood estimator, have been investigated \cite{Dwork:2006:CNS:2180286.2180305,Nissim:2007:SSS:1250790.1250803,dwork2010differential}.

As described in this paper, we investigate differentially private outlier analysis. Outlier detection is a task to identify instances that are apparently distant from the remaining instances. The objective of differential privacy is to prevent adversaries from learning of the presence (or absence) of any particular instance from released information. Outlier detection and privacy protection are therefore intrinsically conflicting tasks. It presents a challenging difficulty.

To overcome this difficulty, instead of identifying outliers, we consider reporting statistical aggregation on outliers that helps to recognize the occurrence of anomalous situations, with a guarantee of differential privacy. More specifically, we examine differentially private queries of three types for outlier analysis. One is a query to count outliers that appear in a given subspace. Second is a query to discover the top-$h$ subspaces containing numerous outliers. Third is a query to detect the top-$h$ outliers are that more likely.

\subsection{Related Works}
\label{subsec:relatedWorks}
We introduce existing studies of privacy aspects of outlier analysis.
Secure multiparty computation (SMC) is a cryptographic tool that facilitates the evaluation of a specified function over their private inputs jointly, while maintaining these inputs as private. 
 One earlier study\cite{Outlier04} introduced an SMC for distance-based outlier detection from horizontally and vertically partitioned private databases using random shares. One earlier study \cite{xue2008privacy} investigated an SMC for spatial outlier detection. Another report of a study \cite{dung2011distributed} presented an SMC for distance-based outlier detection with the Mahalanobis distance. Another study \cite{li2013privacy} presented an SMC for density-based outlier detection. The objective of these works is to detect outliers securely without mutually sharing privately distributed data; privacy invasion caused by observing detected outliers is not considered. 

Studies of differential privacy for outlier analysis are few, presumably because of its intrinsic difficulty, as described. Only one report in the literature \cite{DBLP:conf/icdm/FanX13} describes a study that considers the differential privacy of outlier analysis. This study was conducted to detect anomalous changes from a time series under a guarantee of differential privacy. 
The objective of this study is closely related to ours, whereas this method releases a one-dimensional time series with differential privacy; outlier detection is applied to the released data as a post process. Consequently, the approach differs from ours.

\cite{lui2014outlier} introduced a novel privacy notion, outlier privacy, as a generalization of differential privacy. Outlier privacy measures an individual's privacy parameter by how much of an ``outlier'' the individual is. The objective of this study is to define privacy using the notion of outliers, but not for differentially private outlier analysis.

\subsection{Our Contribution}
  \label{subsec:contribution}
  In this paper, we present a methodology for distance-based outlier analysis with guarantee of differential privacy. Our proposal consists of two different types of differentially private queries.

  {\bf Differentially private counting of outliers.} This query reports the number of outliers that appear in a given subspace. Since the global sensitivity of counts of outliers is very large, the resulting outputs can be too noisy. We focus on the observation that the counts of outliers are expected to be relatively small compared to the number of data in typical datasets. Taking advantage of this, we develop a randomization mechanism for counts of outliers based on the smooth upper bound of the local sensitivity~\cite{Nissim:2007:SSS:1250790.1250803}. Randomization mechanism based on the smooth upper bound typically have better utility because of its data dependency; however, its evaluation is often costly. To alleviate this, we provide an efficient algorithm for evaluation of smooth upper bound for counting outliers. 

  {\bf Differentially private discovery of subspaces.}  This query finds top-$h$ subspaces containing a large number of outliers. This task can be naively achieved by issuing count queries to each subspace in turn. However, the variation of subspaces can grow exponentially in the data dimensionality. This can cause serious consumption of the privacy budget. For this task, we employed the exponential mechanism. We specifically design a score function for subspace discovery which is insensitive to the size of the subspace set. Because of this insensitivity, the proposed mechanism achieves better detection accuracy even with high dimensionality. 

  To the best of our knowledge, this study is the first trial to ensure differential privacy for distance-based outlier analysis. 
We demonstrated our methods with synthesized datasets and real datasets. The experimental results show that our methods achieve better utility compared to the global sensitivity based methods. 

\section{Differential Privacy}
 \label{sec:DP}
 Let $X=\{\bm{x}_1, \bm{x}_2, \hdots, \bm{x}_N\} \in \mathcal{D}^N$ be a database.
 An {\it analyst} issues a query $f: \mathcal{D}^N \to \mathcal{T}$; then the database returns an output, where $\mathcal{T}$ denotes the range of the outputs.
 {\em Differential privacy}, a recent notion of privacy, measures the privacy breach of database $X$ caused by releasing output $t \in \mathcal{T}$ with no assumptions of the background knowledge of adversaries.
 The outputs are typically modified using a {\em mechanism} $\mathcal{A}:\mathcal{D}^N \to \mathcal{T}$ before release to preserve differential privacy.
 
 Let $H(X, X')=|\{i:\bm{x}_i \neq \bm{x}'_i\}|$ denote the Hamming distance, the number of different records in $X$ and $X'$. If $H(X,X')=1$, then it can be said that $X$ and $X'$ are neighbor databases, or $X \sim X'$ shortly. In the following, we presume $|X|=|X'|=N$. Then, $(\epsilon,\delta)$-differential privacy is defined as shown below.
 
 \begin{definition}[$(\epsilon,\delta)$-Differential Privacy]  \label{DP}
  Mechanism $\mathcal{A}$ guarantees $(\epsilon,\delta)$-differential privacy if, $\forall X \sim X'$ and $\forall T \subseteq \mathcal{T}$,
  \begin{equation}  \label{DP-eq}
   Pr[\mathcal{A}(X) \in T] \leq e^ \epsilon Pr[\mathcal{A}(X') \in T] + \delta.
  \end{equation}
 \end{definition}
 The parameter $\epsilon$ and $\delta$ are designated as privacy parameters.
Randomization based on the global sensitivity is the most straightforward realization of differential privacy for continuous outputs~\cite{Dwork:2006:CNS:2180286.2180305}. The exponential mechanism is a natural extension for discrete outputs~\cite{conf/eurocrypt/DworkKMMN06}.
We use both mechanisms for our method, which is explained in detail in the next subsection.

\subsection{Sensitivity-based Method}
\subsubsection{Global Sensitivity}
Presuming that the output domain of query $f$ is continuous, then randomization based on the global sensitivity \cite{Dwork:2006:CNS:2180286.2180305} provides a mechanism that guarantees differential privacy for queries of any type, as long as its global sensitivity is evaluable.
The global sensitivity is defined as explained below.
  \begin{definition}[Global Sensitivity]
   \label{GS-defi}
Letting $\mathcal{D}$ be the domain of data, the global sensitivity of query $q:\mathcal{D}^N \to \mathbb{R}^d$ is given as
   \begin{equation}
    \label{GS-defi-eq}
     GS_{q} = \max_{X \sim X'} \|q(X)-q(X')\|_2,
   \end{equation}
where $\| \cdot \|_2$ denotes $\ell_2$ norm of vectors.
  \end{definition}
Given the global sensitivity for a specified query, randomization by a normal distribution based on the global sensitivity guarantees $(\epsilon,\delta)$-differential privacy, as stated by the following theorem.
  \begin{theorem}[Gaussian Mechanism by Global Sensitivity \cite{conf/eurocrypt/DworkKMMN06}]
   \label{LM}
Let $GS_{q}$ be the global sensitivity of a query $q: \mathcal{D}^N \to \mathbb{R}^d$. Then, mechanism $\mathcal{A}$ that randomizes the output of the query by eq. (\ref{LM-eq}) provides $(\epsilon,\delta)$-differential privacy
   \begin{equation}
    \label{LM-eq}
     \mathcal{A}_q(X) = q(X) + Y,
   \end{equation}
where $Y \in \mathbb{R}$ denotes a noise in which $Y$ is an sample drawn from the Gaussian distribution with mean $0$ and variance $\frac{GS_q^2 \cdot 2\log{(2/\delta)}}{\epsilon^2}$.
  \end{theorem}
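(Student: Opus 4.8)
The plan is to control the \emph{privacy loss random variable} and then invoke the standard reduction from a tail bound on that variable to $(\epsilon,\delta)$-differential privacy. Fix neighboring databases $X \sim X'$ and let $p$ and $p'$ denote the densities of $\mathcal{A}_q(X)$ and $\mathcal{A}_q(X')$. First I would establish an elementary splitting lemma: if the ``bad'' set $B = \{t : \log(p(t)/p'(t)) > \epsilon\}$ satisfies $\Pr_{t \sim \mathcal{A}_q(X)}[t \in B] \leq \delta$, then for every measurable $T$ we have $\Pr[\mathcal{A}_q(X) \in T] \leq e^{\epsilon}\Pr[\mathcal{A}_q(X') \in T] + \delta$. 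This follows by partitioning $T$ into $T \setminus B$ and $T \cap B$: on $T \setminus B$ the density ratio is at most $e^{\epsilon}$ by definition of $B$, while the mass on $T \cap B$ is bounded by $\Pr[t \in B] \leq \delta$. Thus the entire problem reduces to bounding the upper tail of the log–density ratio.

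Next I would compute the distribution of the privacy loss explicitly. Let $v = q(X) - q(X')$, so that $\|v\|_2 \leq GS_q$ by the global-sensitivity bound (Definition~\ref{GS-defi}). Writing a sample of $\mathcal{A}_q(X)$ as $t = q(X) + Z$ with $Z \sim \mathcal{N}(0,\sigma^2 I)$ and $\sigma^2 = 2\,GS_q^2\log(2/\delta)/\epsilon^2$, a direct expansion of the two Gaussian densities gives
\[ \log \frac{p(t)}{p'(t)} = \frac{\|t - q(X')\|_2^2 - \|t - q(X)\|_2^2}{2\sigma^2} = \frac{\langle Z, v\rangle}{\sigma^2} + \frac{\|v\|_2^2}{2\sigma^2}. \]
Since $\langle Z, v\rangle$ is a centered Gaussian with variance $\sigma^2\|v\|_2^2$, the privacy loss is itself Gaussian with mean $\|v\|_2^2/(2\sigma^2)$ and standard deviation $\|v\|_2/\sigma$. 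Because both quantities are monotone in $\|v\|_2$, I may replace $\|v\|_2$ by its worst-case value $GS_q$, reducing everything to a single explicit one-dimensional Gaussian tail.

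The final step is a Gaussian upper-tail estimate: for $L \sim \mathcal{N}(m,s^2)$ with $m = GS_q^2/(2\sigma^2)$ and $s = GS_q/\sigma$, I would use $\Pr[L > \epsilon] \leq \exp\!\big(-(\epsilon-m)^2/(2s^2)\big)$ and substitute the chosen $\sigma^2$. The hard part is verifying that this particular $\sigma^2$ really forces the tail below $\delta$: after substitution the exponent simplifies to $\log(2/\delta)\,\big(1 - \epsilon/(4\log(2/\delta))\big)^2$, and one must check that this is at least $\log(1/\delta)$. This fails for arbitrary $\epsilon$; it relies on the standing assumption $\epsilon \leq 1$, so that the correction term $\epsilon/(4\log(2/\delta))$ is small enough for the gap $\log 2$ between $\log(2/\delta)$ and $\log(1/\delta)$ to absorb it. Carefully tracking this inequality, together with confirming $\epsilon > m$ so that the one-sided bound is legitimate, is the only genuinely delicate calculation; the rest is bookkeeping. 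Combining the resulting tail bound $\Pr[t\in B]\le\delta$ with the splitting lemma of the first step then yields $(\epsilon,\delta)$-differential privacy.
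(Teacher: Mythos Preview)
Your argument is the standard proof of the Gaussian mechanism and is correct in all essentials; there is nothing to compare against, because the paper does not prove this theorem at all---it is quoted as background from \cite{conf/eurocrypt/DworkKMMN06} (and, in its more common form, from Dwork--Roth). Two minor remarks. First, your observation that the stated variance only yields $(\epsilon,\delta)$-DP under a constraint such as $\epsilon\le 1$ is exactly right and is in fact a hidden hypothesis missing from the paper's statement; the computation you outline (reducing to $\log(2/\delta)\bigl(1-\epsilon/(4\log(2/\delta))\bigr)^2\ge\log(1/\delta)$) goes through under $\epsilon\le 2\log 2$, so $\epsilon\le 1$ is a convenient sufficient condition. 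Second, the theorem as written has $Y\in\mathbb{R}$ while $q$ takes values in $\mathbb{R}^d$; your proof correctly treats $Y$ as a $d$-dimensional isotropic Gaussian, which is the intended reading.
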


\subsubsection{Smooth Sensitivity}
\label{subsubsec:smooth}
For some functions, the global sensitivity can be impractically large even when the sensitivities are small with almost all neighboring pairs. This large sensitivity occurs because it is evaluated as the greatest difference of outputs among many possible neighboring pair of databases.
For example, the global sensitivity of median is $N$, the whole sample size, but this arises only in a pathological situation~\cite{Nissim:2007:SSS:1250790.1250803}.
Randomization base on the smooth sensitivity enables the use of moderate sensitivity for such sensitive queries.
For a given database $X$, the {\em local sensitivity} is defined as the greatest difference of outputs for $\forall X'$ s.t. $X' \sim X$.

\begin{definition}[Local Sensitivity]
 \label{LS-defi}
 Let $\mathcal{D}$ be the domain of the data. Then, the local sensitivity of query $q:\mathcal{D}^N\to\mathbb{R}^d$ is given as
 \begin{equation}
  \label{LS-defi-eq}
   LS_q(X) = \max_{X':H(X,X')=1} ||q(X)-q(X')||.
 \end{equation}
\end{definition}
It is noteworthy that that $GS_q = \max_{X \in \mathcal{D}^N} LS_q(X)$.
Nissim et al.~\cite{Nissim:2007:SSS:1250790.1250803} presented the {\em smoothed sensitivity}, which is a class of smooth upper bounds to the local sensitivity.
\begin{definition}[Smooth upper bound] \label{Sub-defi}
For $\beta>0$, a function $S_\beta:D^n \rightarrow \mathbb{R}^+$ is a $\beta$-smooth upper bound on the local sensitivity of query $q$ if it satisfies the following requirements:
\begin{align}    \label{Sub-defi-eq}
\forall X \in D^n&,& S_{q,\beta}(X) \ge LS_f(X); \\
\forall X \sim X'&,& S_{q,\beta}(X) \le e^\beta S_{q,\beta}(X').
\end{align}
\end{definition}
The smallest function satisfying Definition \ref{Sub-defi} is the smooth sensitivity of $q$:
\begin{definition}[Smooth Sensitivity] \label{S-defi}
Given $\beta > 0$, the smooth sensitivity of query $q:\mathcal{D}^N \to \mathbb{R}^p$ is
\begin{equation}    \label{S-defi-eq}
     S^{*}_{q, \beta}(X) = \max_{X' \in \mathcal{D}^N} ( LS_{q}(X') \cdot e^{-\beta H(X,X')}).
\end{equation}
\end{definition}
\cite{Nissim:2007:SSS:1250790.1250803} also showed that adding noise proportional to the smooth sensitivity yields a private output perturbation mechanism if the noise distribution satisfies some properties. The differential privacy of a Gaussian mechanism realized by the smooth sensitivity can be stated by the following theorem.
 \begin{theorem}[Gaussian Mechanism by Smooth Sensitivity \cite{Nissim:2007:SSS:1250790.1250803}]   \label{M-S}
Let $Y$ be a noise generated from the Gaussian distribution with mean $0$ and variance $1$.
Let $S_{q,\beta}$ be a $\beta$-smooth upper bound of query $q$. Then, if $\alpha = \frac{\epsilon}{5\sqrt{2\ln{2/\delta}}}$ and $\beta = \frac{\epsilon}{4(p+\ln{2/\delta})}$, mechanism $\cal{A}$ guarantees $(\epsilon,\delta)$-differential privacy:
   \begin{equation}
    \label{M-S-eq}
    \mathcal{A}_q(X) = q(X) + \frac{S_{q, \beta}(X)}{\alpha} \cdot Y.
   \end{equation}
  \end{theorem}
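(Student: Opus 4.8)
The plan is to recast the statement as an instance of the \emph{admissible noise} framework and then verify the required properties for the Gaussian. The core idea is that a $\beta$-smooth upper bound forces the passage from $\mathcal{A}_q(X)$ to $\mathcal{A}_q(X')$ to be a small translation composed with a small dilation, and that the standard Gaussian is stable under both operations at privacy cost $(\epsilon/2,\delta/2)$ each.

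First I would fix neighbors $X \sim X'$ and reparametrize. With $b = S_{q,\beta}(X)/\alpha$, $b' = S_{q,\beta}(X')/\alpha$, $\lambda = \ln\!\big(S_{q,\beta}(X)/S_{q,\beta}(X')\big)$ and $\Delta = (q(X)-q(X'))/b'$, a direct computation yields the identity
\[
\mathcal{A}_q(X) = q(X) + bY = q(X') + b'\big(\Delta + e^{\lambda}Y\big), \qquad \mathcal{A}_q(X') = q(X') + b'Y .
\]
Applying the common affine map $u \mapsto q(X') + b'u$, it suffices to compare the law of $\Delta + e^{\lambda}Y$ with that of $Y$. Here the two clauses of Definition~\ref{Sub-defi} enter: the smoothness clause $e^{-\beta} \le S_{q,\beta}(X)/S_{q,\beta}(X') \le e^{\beta}$ gives $|\lambda| \le \beta$, while the upper-bound clause $LS_q(X') \le S_{q,\beta}(X')$ combined with $\|q(X)-q(X')\| \le LS_q(X')$ (Definition~\ref{LS-defi}) gives $\|\Delta\| \le \alpha$. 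Thus the whole problem reduces to a displacement of Euclidean length at most $\alpha$ composed with a scaling whose log-factor is at most $\beta$ in magnitude.

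Next I would establish the two stability properties of the standard $p$-dimensional Gaussian density $h(z) \propto e^{-\|z\|^2/2}$. For the \emph{sliding} step, the log-density ratio between $Y$ and $Y+\Delta$ is $\langle Y,\Delta\rangle - \tfrac12\|\Delta\|^2$; the dominant term $\langle Y,\Delta\rangle$ is a centered Gaussian of standard deviation $\|\Delta\| \le \alpha$, so a one-dimensional Gaussian tail bound shows this ratio exceeds $\epsilon/2$ with probability at most $\delta/2$ as soon as $\alpha\sqrt{2\ln(2/\delta)}$ is a small fraction of $\epsilon$; the choice $\alpha = \epsilon/(5\sqrt{2\ln(2/\delta)})$ makes this quantity equal to $\epsilon/5$, leaving slack for the composition. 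For the \emph{dilation} step, the log-density ratio between $Y$ and $e^{\lambda}Y$ equals $\lambda p - \tfrac12\|Y\|^2(1-e^{-2\lambda})$; bounding it requires a $\chi^2$ concentration estimate for $\|Y\|^2$ around its mean $p$, and the choice $\beta = \epsilon/(4(p+\ln(2/\delta)))$ is calibrated so that the deterministic contribution $\lambda p$ and the fluctuation contribution (of order $\lambda\ln(2/\delta)$ on the relevant tail) jointly stay below $\epsilon/2$ outside an event of probability $\delta/2$.

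Finally I would chain the two inequalities through the intermediate law of $e^{\lambda}Y$: dilation relates $Y$ to $e^{\lambda}Y$, and sliding relates $e^{\lambda}Y$ to $\Delta + e^{\lambda}Y$, so that for every measurable $T$ the two $(\epsilon/2,\delta/2)$ bounds compose into the desired $(\epsilon,\delta)$ guarantee. I expect the dilation step to be the main obstacle: unlike the translation, which collapses to a one-dimensional Gaussian tail, the scaling mixes a dimension-dependent deterministic shift $\lambda p$ with the anisotropic fluctuations of $\|Y\|^2$, so pinning down the constant in $\beta$ is delicate. A secondary subtlety is the bookkeeping of the additive $\delta$ terms under composition---sliding must be applied to the already-dilated variable $e^{\lambda}Y$, whose effective displacement grows by a factor $e^{\beta}$---and it is precisely to absorb these effects that $\alpha$ carries the constant $5$ rather than a tighter value.
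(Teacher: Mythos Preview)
The paper does not prove this theorem at all: Theorem~\ref{M-S} is stated as a background result quoted from Nissim, Raskhodnikova and Smith~\cite{Nissim:2007:SSS:1250790.1250803}, with no accompanying proof or sketch. Consequently there is nothing in the paper to compare your proposal against.

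That said, your sketch is a faithful outline of the original argument in~\cite{Nissim:2007:SSS:1250790.1250803}: reduce the comparison of $\mathcal{A}_q(X)$ and $\mathcal{A}_q(X')$ to a translation by $\Delta$ with $\|\Delta\|\le\alpha$ composed with a dilation by $e^{\lambda}$ with $|\lambda|\le\beta$, then show the standard Gaussian is $(\epsilon/2,\delta/2)$-stable under each operation (sliding via a one-dimensional Gaussian tail, dilation via a $\chi^2$ concentration bound), and compose. Your identification of the dilation step and the associated $\chi^2$ estimate as the delicate part, and of the constant~$5$ in $\alpha$ as slack to absorb the $e^{\beta}$ blow-up when sliding the already-dilated variable, matches the structure of the original proof. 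If the goal is merely to justify invoking the theorem here, a citation suffices; if you want a self-contained argument, what you wrote is the correct skeleton, though the $\chi^2$ tail bound and the bookkeeping of the additive $\delta$ terms would need to be spelled out in full.
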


 \subsection{Exponential Mechanism}
  The sensitivity-based method basically presumes that outputs of the target query are real-valued. The exponential mechanism is a natural extension of the sensitivity-based method to discrete outputs.
  Intuitively, the exponential mechanism relies on a utility function $u: \mathcal{D}^N \times \mathcal{R} \to \mathbb{R}$ that outputs a larger value if the input to the utility function is close to the true output. With this utility function, output values that are closer to the true output are likely to be provided by the exponential mechanisms, and vice versa. The sensitivity of the utility function is defined as presented below.
 \begin{definition}[Sensitivity of a utility function $u$]
  \label{delta}
  Let $\mathcal{D}$ be the domain of data, and let $u: \mathcal{D}^N \times \mathcal{R} \to \mathbb{R}$ be a utility function. Then, the sensitivity of $u$ is given as
  \begin{equation}
   \label{delta-eq}
    \Delta u =  \max_{r \in \mathcal{R}} \max_{X,X' \in \mathcal{D}^N:H(X,X')=1} ||u(X,r)-u(X',r)||.
  \end{equation}
 \end{definition}
 Given the sensitivity of a utility function, randomization of outputs following Theorem \ref{EM} guarantees $\epsilon$-differential privacy.
  \label{S-EM}
  \begin{theorem}[Exponential Mechanism \cite{McSherry:2007:MDV:1333875.1334185}]
   \label{EM}
   Let $\Delta u$ be the sensitivity of utility function $u:\mathcal{D}^{N} \times \mathcal{R} \to \mathbb{R}$. Then, mechanism $\epsilon^{\epsilon}_{u,\Delta u}$ that randomizes the output of the query by eq. (\ref{eq:expo-mecha}) provides $2\epsilon \Delta u$-differential privacy
   \begin{equation}
    Pr[\epsilon^{\epsilon}_{u,\Delta u}(X) = t \in \mathcal{R}] = \frac{\exp(\epsilon \cdot u(X,t))}{\int_{\mathcal{R}} \exp(\epsilon \cdot u(X,r))dr}. \label{eq:expo-mecha}
   \end{equation}
   \end{theorem}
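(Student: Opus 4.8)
The plan is to control the pointwise ratio of output densities between any two neighboring databases and then integrate, which directly yields Definition \ref{DP} with $\delta=0$. Fix neighbors $X \sim X'$ and an output $t \in \mathcal{R}$, write $p_X(t)$ for the density given in eq. (\ref{eq:expo-mecha}), and denote the mechanism by $\mathcal{A}$ for brevity. The starting point is the factorization
\begin{equation}
\frac{p_X(t)}{p_{X'}(t)} = \frac{\exp(\epsilon\, u(X,t))}{\exp(\epsilon\, u(X',t))} \cdot \frac{\int_{\mathcal{R}} \exp(\epsilon\, u(X',r))\,dr}{\int_{\mathcal{R}} \exp(\epsilon\, u(X,r))\,dr},
\end{equation}
in which the normalizing constants have been separated from the numerator exponentials. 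It then suffices to bound each of the two factors by $e^{\epsilon \Delta u}$; the factor of two in the stated privacy level appears precisely because both the numerator and the normalizer each contribute one copy of $e^{\epsilon\Delta u}$.

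For the first factor I would invoke the sensitivity bound of Definition \ref{delta}, which gives $u(X,t)-u(X',t) \le \Delta u$, so the first factor equals $\exp(\epsilon(u(X,t)-u(X',t))) \le e^{\epsilon\Delta u}$. For the second factor I would apply the same bound inside the integral: for every $r$, Definition \ref{delta} yields $u(X',r)-u(X,r)\le \Delta u$, hence $\exp(\epsilon\, u(X',r)) \le e^{\epsilon\Delta u}\exp(\epsilon\, u(X,r))$ pointwise; integrating this inequality over $\mathcal{R}$ pulls the constant outside and bounds the second factor by $e^{\epsilon\Delta u}$ as well. Multiplying the two estimates gives $p_X(t) \le e^{2\epsilon\Delta u}\, p_{X'}(t)$ for every $t$.

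The last step is to integrate this uniform pointwise inequality over an arbitrary measurable $T \subseteq \mathcal{R}$, obtaining $Pr[\mathcal{A}(X) \in T] \le e^{2\epsilon\Delta u}\, Pr[\mathcal{A}(X') \in T]$, which is exactly $2\epsilon\Delta u$-differential privacy. The step requiring the most care is the second factor, together with keeping track of directions: the first factor uses $u(X,t)-u(X',t)\le\Delta u$, whereas the second uses the opposite inequality $u(X',r)-u(X,r)\le\Delta u$, both of which hold because Definition \ref{delta} bounds the absolute difference $\|u(X,r)-u(X',r)\|$ uniformly over $r$. That uniformity is exactly what permits pulling the constant $e^{\epsilon\Delta u}$ outside the integral, and it is why the two bounds reinforce to give the factor of two rather than cancelling. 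Everything else is a direct substitution of the definitions.
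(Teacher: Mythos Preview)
Your argument is correct and is exactly the standard proof of the exponential mechanism's privacy guarantee: split the density ratio into the numerator-exponential factor and the normalizer factor, bound each by $e^{\epsilon\Delta u}$ using Definition~\ref{delta}, and integrate the resulting pointwise bound over any measurable $T$. There is nothing to criticize.

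As for comparison with the paper: the paper does not actually supply a proof of Theorem~\ref{EM}. It is stated as a cited result from McSherry and Talwar~\cite{McSherry:2007:MDV:1333875.1334185} and used as a black box (e.g.\ in the proof of Theorem~\ref{theo:uf-subspace}). Your write-up is precisely the argument given in that reference, so there is no divergence in approach to discuss.
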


\section{Problem Statement}
\label{sec:problemStatement}
 Our objective is to analyze outliers contained in a private database in a differentially private manner. Outlier detection is a problem to identify a point that is significantly distant from other points. Hence, the result of outlier detection is essentially privacy invasive; privacy protection and outlier detection have conflicting objectives. In order to reconcile the contradicting goals, we investigate two tasks, (1) counting outliers in a given subspace and (2) discovering subspaces containing many outliers, under the constraint of differential privacy.
 
 Subspace discovery for outlier analysis has been investigated as a major topic of outlier detection \cite{Knorr:1999:FIK:645925.671529,DBLP:conf/cikm/KellerMWB13,DBLP:conf/icde/KellerMB12}.
 The major motivation of existing subspace discovery methods was basically tackling the high dimensionality. Full-space outlier analysis might fail to detect outliers found only in specific sub-spaces because of a large number of irrelevant attributes~\cite{DBLP:conf/icde/KellerMB12,aggarwal2013outlier}.
 
 In this study, we solve the subspace discovery problem in order to balance privacy protection and outlier analysis. The found subspaces can be interpreted as knowledge to understand why the points are outliers and how such outliers are generated. After identifying the subspaces containing a large number of outliers, the number of the outliers are released. Our solutions presented in the following sections guarantees differential privacy for both tasks.

\subsection{Outlier Detection}
  \label{subsec:outlier}
  In this study, we use distance-based outliers~\cite{Knorr:1998:AMD:645924.671334}.
  Presuming that records are real-valued vectors, $\bm{x}_i \in \mathbb{R}^d$, and letting $X=\{\bm{x}_i\}_{i=1}^{N}$ denote the database, we let $S \in \{1,2, \hdots, d\}$ denote a subspace.
  The Euclidean distance between $\bm{x}, \bm{y} \in \mathbb{R}^d$ in subspace $S$ is denoted by $dist_S(\bm{x}, \bm{y})=\sqrt{\frac{\sum_{i \in S}(x_i-y_i)^2}{|S|}}$~\cite{DBLP:conf/icde/KellerMB12}. Then, the set of neighborhood vectors of $\bm{x}$ in subspace $S$ is defined as follows.
  
  \begin{definition}[Neighboring vector in subspace $S$]  \label{neighbor-vector}
   Let $r>0$ and $k \in \{1, \hdots, N\}$.
   Then, the set of neighboring vectors of $\bm{x} \in S$ is
   \begin{equation}
    \label{neighbor-vector-eq}
     N_S(X,r,\bm{x})=\{\bm{x} \in X | dist_S(\bm{x},\bm{y}) \leq r, \bm{x} \neq \bm{y}, \bm{y} \in X\}.
   \end{equation}
  \end{definition}
  With this definition of the neighboring vectors, the outliers are defined as follows.

  \begin{definition}[Outliers in subspace $S$]
   \label{outlier}
   Given threshold $k$ and radius $r$, the set of outliers of $X$ in subspace $S$ is
   \begin{equation}  \label{outlier-eq}
    O_S(X, k, r) = \{\bm{x} \in X | |N_S(X,r,\bm{x})| < k \}.
   \end{equation}
  \end{definition}
  
  Distance-based outliers are definable in this study with any type of object and distance defined for the corresponding objects, but we presume that the objects are represented as real vectors and that they use the Euclidean distance as the distance definition.

\subsection{Queries for Outlier Analysis}
As already discussed, we consider two tasks for outlier analysis, outlier count and subspace discovery.

Let $S \in 2^{\{1,2, \hdots, d\}}$ be a target subspace. Then, the task of outlier count is to find the number outliers in subspace $S$:
\begin{equation}
 \label{count}
  q_{count}(X, k, r, S) = |O_S(X, k, r)|.
\end{equation}
If the subspace is not specified, $O(X, k, r)$ denotes the set of outliers in the full dimension.

Let $\mathcal{S} \subseteq 2^{\{1,2, \hdots, d\}}$ be a subspaces.
The task of top-$h$ subspace discovery is to identify $h$ subspaces in $\mathcal{S}$ containing the $h$ largest number of outliers:
\begin{equation}
 \label{subspace}
  q_{subspace}(X, k, r, h, \mathcal{S}) = \{ S_{\pi(1)}, S_{\pi(2)}, \hdots, S_{\pi(h)}\}
\end{equation}
where $\pi: \{1,\hdots, |{\cal S}|\} \to \{1,\hdots, |{\cal S}|\}$ is a function that outputs the index of the subspace ordered by $q_{count}(X, k, r, S)$. For example, $\pi(i)$ denotes the index of the subspace containing the $i$th largest number of outliers.

  \subsection{Differential privacy of Outlier Analysis}
  \label{subsec:sinario}
  We introduce several typical scenarios of differentially private outlier analysis using the two types of queries, $q_{count}$ and $q_{subspace}$.
  
  {\bf Scenario 1.} Given threshold $k$ and radius $r$, suppose the objective is to inspect that the outliers exists in the given dataset. The analyst issues query $z = q_{count}(X,k,r)$, and then checking $z > 0$ yields the final result. Let $z' = q_{count}(X', k, r)$.
  For guarantee of $(\epsilon, \delta)$-differential privacy, we require, for $\forall X \sim X'$ and $\forall t \in \mathcal{T}$,
  \begin{align}
   Pr[t=\mathcal{A}(z)] \le e^\epsilon Pr[t=\mathcal{A}(z')] + \delta.
  \end{align}
  
  {\bf Scenario 2.} Let data dimension be $d=3$. Given threshold $k$ and radius $r$, suppose the objective is to identify the subspaces that cause the two largest number of outliers and learn the number of outliers in the two discovered subspaces. Then, the target subspace set is $\mathcal{S}=\{\{1\},\{2\},\{3\},  \{1,2\}, \{1,3\}, \{2,3\}, \{1, 2,3\}\}$.
  The analyst issues query $q_{subspace}(X, k, r, 2, \mathcal{S})$ and obtains the two subspaces $z_1=\{ S_{\pi(1)}, S_{\pi(2)} \}$. For each subspace, the analyst issues queries as $z_2 = q_{count}(X, k, r, S_{\pi(1)})$ and $z_3 = q_{count}(X, k, r, S_{\pi(2)})$.
  For guarantee of $(\epsilon, \delta)$-differential privacy, we require, for $\forall X \sim X'$ and $\forall t \in \mathcal{T}$,
  \begin{align}
   Pr[t=\mathcal{A}(z_1,z_2,z_3)] \le e^\epsilon Pr[t=\mathcal{A}(z_1', z'_2, z'_3)] + \delta,
  \end{align}
  where $z'_1,z'_2,$ and $z'_3$ are reposes learned from $X' \sim X$.

\section{Differentially Private Count of Outliers}
 \label{sec:DP_count}
 As explained in this section, we investigate the problem of differentially private count of outliers in a given subspace. The discussion herein holds for any subspace including the full space. Therefore, for this discussion, we presume that the outlier is counted in the full dimension.

\subsection{Difficulties in Global Sensitivity Method}
  \label{subsec:qcount-GS}
  Analytical evaluation of the global sensitivity of determination of $q_{count}$ is not trivial, partly because it needs the kissing number. The kissing number $K_d$ is the largest number of hyperspheres with same radius in $\mathbb{R}^d$ that can touch equivalent hyperspheres with no intersections~\cite{Musin_thekissing,Musin05thekissing,mittelmann2010high}.
  The kissing numbers in $d=1$ and $d=2$ are readily derived respectively as $K_1=2$ and $K_2=6$ (see Fig.~\ref{fig:upperbound} for $K_2=6$). However, finding the kissing number in $d \ge 3$ is not trivial. In addition, the kissing number in general dimensions remains as an open problem~\cite{Musin_thekissing,Musin05thekissing,mittelmann2010high}.
   \begin{figure}[t]
    \centering
    \includegraphics[width=7.5cm]{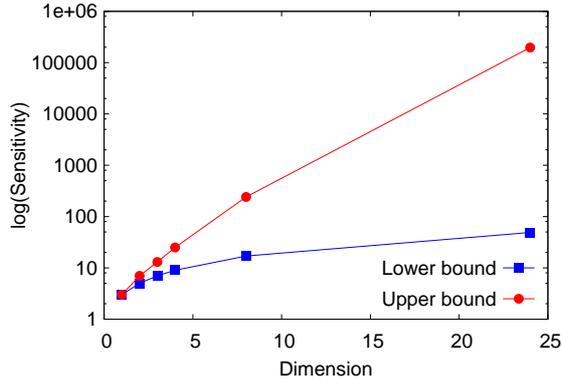}
    \caption{The bounds of the global sensitivity for counting outliers}
    \label{fig:GS-bound}
   \end{figure}
  We derive the upper and lower bound of the global sensitivity of $q_{count}$ presuming that the kissing number in general dimensions is given.
   \begin{theorem}[Upper and lower bound on the global sensitivity of $q_{count}$]
    \label{GS-Bound}
    Let $K_d$ be the kissing number in $\mathbb{R}^d$. Then, the upper and lower bound on the global sensitivity of $q_{count}$ is
    \begin{equation}
     \label{GS-eq}
      \min(N, 2dk + 1) \leq GS_{q_{count}, d}(k) \leq \min(N, kK_d + 1).
    \end{equation}
   \end{theorem}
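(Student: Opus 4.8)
The plan is to analyze how the outlier count reacts to modifying a single record, say replacing $\bm{p}\in X$ by $\bm{q}$ to obtain a neighbor $X'$. Since the outlier predicate of a point depends only on whether its neighbor count drops below $k$, a point other than the modified record can change status only when its neighbor count crosses the threshold. Removing $\bm{p}$ lowers the neighbor count of every point within distance $r$ of $\bm{p}$ by one, and inserting $\bm{q}$ raises the neighbor count of every point within distance $r$ of $\bm{q}$ by one, so any single point moves by at most one. Consequently the only points that can turn into outliers are those in the set $A^{+}$ of $\bm{y}\in X\setminus\{\bm{p}\}$ that have $\bm{p}$ as a neighbor and have exactly $k$ neighbors in $X$ (they fall from $k$ to $k-1$), while the only points that can stop being outliers lie in a symmetric set $A^{-}$ attached to $\bm{q}$. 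Writing the signed change and bounding each term separately, I would show the increase is at most $|A^{+}|+1$ and the decrease at most $|A^{-}|+1$, the extra $+1$ covering the status of the modified record itself.

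The geometric heart of the upper bound is a law-of-cosines estimate: if $\bm{y},\bm{z}\in X$ both lie within distance $r$ of $\bm{p}$ and their directions from $\bm{p}$ subtend an angle at most $60^\circ$, then they are mutual neighbors, since $\|\bm{y}-\bm{z}\|^{2}\le\|\bm{y}-\bm{p}\|^{2}+\|\bm{z}-\bm{p}\|^{2}-\|\bm{y}-\bm{p}\|\,\|\bm{z}-\bm{p}\|\le\max(\|\bm{y}-\bm{p}\|,\|\bm{z}-\bm{p}\|)^{2}\le r^{2}$. Contrapositively, any two points of $A^{+}$ that are \emph{not} mutual neighbors subtend an angle strictly greater than $60^\circ$ at $\bm{p}$. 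Hence any family of pairwise non-neighbors in $A^{+}$ induces unit direction vectors from $\bm{p}$ that are pairwise separated by more than $60^\circ$, and by the very definition of the kissing number there can be at most $K_d$ of them. In graph terms, every independent set of the neighbor graph restricted to $A^{+}$ has size at most $K_d$.

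To turn this into a bound on $|A^{+}|$ itself, I would count the independent sets needed to cover $A^{+}$. Each $\bm{y}\in A^{+}$ has exactly $k$ neighbors in $X$, one of which is $\bm{p}\notin A^{+}$, so its degree inside $A^{+}$ is at most $k-1$; a graph of maximum degree $k-1$ is $k$-colorable, so $A^{+}$ splits into at most $k$ independent sets and $|A^{+}|\le kK_d$. Running the same argument with $\bm{q}$ in place of $\bm{p}$ bounds $|A^{-}|$ as well, so the absolute change of the count is at most $kK_d+1$; together with the trivial bound $N$ this gives $GS_{q_{count},d}(k)\le\min(N,kK_d+1)$.

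For the lower bound I would exhibit a neighboring pair realizing a change of $2dk+1$. Put $\bm{p}$ at the origin and, along each of the $2d$ signed coordinate directions $\pm\bm{e}_i$, place a tight cluster of $k$ points just inside distance $r$ of $\bm{p}$. Each cluster point then has exactly $k$ neighbors, namely its $k-1$ clustermates together with $\bm{p}$, whereas points in different clusters are more than $r$ apart and so are not neighbors. Replacing $\bm{p}$ by a point $\bm{q}$ placed far from everything turns all $2dk$ cluster points into outliers and makes $\bm{q}$ an outlier too, for a total change of $2dk+1$; choosing $N$ large enough to host the configuration yields $GS_{q_{count},d}(k)\ge\min(N,2dk+1)$. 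The main obstacle is the upper bound, and more precisely the reduction of counting $A^{+}$ to the kissing number: it requires both the angle estimate that converts ``far apart inside a ball'' into ``angularly separated'' (so that independent sets have size $\le K_d$) and the degree-based coloring that promotes this per-independent-set bound into the overall factor $k$.
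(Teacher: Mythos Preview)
Your argument is correct and in fact considerably more complete than the paper's. The paper treats the lower bound as ``trivial'' and, for the upper bound, only describes an extremal configuration: it packs $K_d$ touching hyperspheres of radius $r/2$ around a center $\bm{x}_0$, puts $k$ coincident points at each surrounding center, and observes that removing $\bm{x}_0$ flips $kK_d$ points plus the moved point; it then asserts that ``no more hyperspheres can be packed around $\bm{x}_0$'' to conclude this is the worst case. That last step is the whole content of the upper bound and the paper does not justify it beyond the packing intuition. Your route is different and rigorous: you isolate the set $A^{+}$ of threshold points in $B(\bm{p},r)$, use the law-of-cosines estimate to convert ``pairwise non-neighbors inside the ball'' into ``directions $>60^\circ$ apart'' (hence any independent set in $A^{+}$ has size at most $K_d$), and then use the degree bound $\le k-1$ inside $A^{+}$ together with greedy $k$-coloring to obtain $|A^{+}|\le kK_d$. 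This coloring step is exactly what the paper's packing picture leaves implicit, and it is what turns the heuristic ``at most $K_d$ clusters of at most $k$ points'' into an actual proof. Your explicit $2d$-axis construction for the lower bound is also a nice concrete witness that the paper omits.
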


\begin{proof}
The lower bound is trivial so we omit the proof. We show the proof for the upper bound.
In the problem of the kissing number, suppose the radius of the center hypersphere and the hyperspheres touching the center hyperspheres (referred to as the surrounding hyperspheres) are $r/2$.
The distance between the center point of the center hypersphere and those of the surrounding hyperspheres are $r$.
Noting that no intersection between the surrounding hyperspheres does not exist, the distance between the center point of any two surrounding hyperspheres are equal to or greater than $r$ (the equality holds if the two surrounding hyperspheres are touching).

Suppose $x_0$ be the center of the center hypersphere and $x_1$ be the center of a surrounding hypersphere that does not touch any other surrounding hyperspheres. We further suppose $k-1$ datapoints exist at exactly the same location as $x_1$, that is, $x_1 = x_2 = \hdots = x_k$.
Letting $X=\{ x_0, x_1, \hdots, x_k \}$, $q_{count}(X,k,r)=0$ because all the $k+1$ points are within a hypersphere of radius $r$. If $x_0$ is removed from $X$ as $X'=\{ x_1, \hdots, x_k \}$, $q_{count}(X,k,r)=k+1$ holds because the remaining $k$ points do not have $k$ neighbor vectors and $x_0$ itself can be an outlier after moved.

By definition of the kissing number, the number of the surrounding hyperspheres that does not touch mutually is at most $\mathcal{K}_d$. By applying the setting described above for each surrounding hypersphere, we have a database that holds $q_{count}(X,k,r)=0$, but after moving of $x_0$, $q_{count}(X,k,r)=k \mathcal{K}_d +1$.
Noting that no more hyperspheres cannot be packed around $x_0$, this is the upper bound of the outlier count.
\end{proof}

   \begin{figure}[t]
    \centering
    \includegraphics[width=6cm]{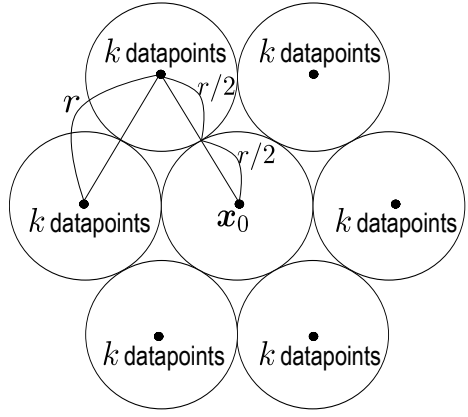}
    \caption{This figure shows an example of the upper bound of the global sensitivity in two dimension. Six surrounding hyperspheres can be packed around the center hypersphere because the kissing number is $K_2=6$. We here suppose $k$ datapoints exist at the center of each surrounding hypersphere and no datapoint exists at $\bm{x}_0$, the center of the center hypersphere. Then, $kK_2$ outliers become inliers by adding a point to $\bm{x}_0$. Suppose the added point is an outlier, Then, the added point can be changed from an outlier to an inlier, too. The upper bound of the global sensitivity for two dimension is thus $kK_2 + 1 = 6k + 1$.}
    \label{fig:upperbound}
   \end{figure}

   We empirically investigate the tightness of the bound in low dimensions.
   In $d=1$ and $d=2$, the global sensitivity is given respectively as $GS_{q_{count},1}(k) = 2k + 1$ and $  GS_{q_{count},2}(k) = 5k + 1$. Noting that $K_1=2$ and $K_2=6$, the bound is tight in $d=1$  but not in $d=2$.
   Fig.~\ref{fig:GS-bound} shows the upper and lower bounds of the global sensitivity of $q_{count}$ evaluated using known upper bounds on the kissing number~\cite{Musin_thekissing,Musin05thekissing,mittelmann2010high}.
   As the figure shows, the upper bound of the global sensitivity grows exponentially with respect to the dimensionality, which indicates that the guarantee of differential privacy by perturbation based on the global sensitivity can be impractical, especially when the dimensionality of the target subspace is large.

   The global sensitivity can be prohibitively large simply because the global sensitivity is evaluated considering the worst case. However, one can typically expect that the number of outliers in the database is much smaller than the number of instances. To improve the utility of the count query, we introduce the smooth sensitivity, which is a sensitivity definition depending on the database.

  \subsection{Local Sensitivity and Smooth Sensitivity}
  \label{subsec:localAndSmooth}
  For convenience of discussion later, several notations are introduced here.
  Given radius $r$, $deg(\bm{x})$ denotes the size of neighborhoods of $\bm{x}$:
  \begin{equation} \label{degree-eq}
   deg(X,r,\bm{x}) = |N(X,r,\bm{x})|.
  \end{equation}
  We say that the degree of $\bm{x}$ is $k$ if $deg(X,r,\bm{x})=k$.
  A set of vectors in $X$ whose degree is exactly $k$ is denoted as
  \begin{equation} \label{degree-set-eq}
   V(X,k,r) = \{\bm{x} \in X : deg(\bm{x})=k\}.
  \end{equation}
  Unless specifically stated otherwise, the radius $r$ and target database $X$ is fixed. Therefore, they are omitted as $deg(\bm{x})$ and $V(k)$.
  Finally, a set of degree-$k$ neighborhoods of $\bm{x}$ in $X$ is denoted as
  \begin{equation}    \label{cv-eq}
   CV(X, \bm{x}, k, r) = B(\bm{x}, r) \cap V(k),
  \end{equation}
  where $B(\bm{x}, r)$ denotes the sphere with radius $r$ and centered at $\bm{x}$.
  
\subsubsection{Local Sensitivity}
\label{subsubsec:qcount-LS}
Given database $X$, let $X_1$ be a database s.t. $H(X, X_1)=1$.
Then, following the definition of the local sensitivity in Section \ref{subsubsec:smooth}, the local sensitivity of $q_{count}$ is defined as
  \begin{align}
   LS_{q_{count}}^{(0)}(X, k, r) = \max_{X_1:H(X,X_1)=1} \|q_{count}(X_0,k,r) - q_{count}(X_1,k,r) \|.
  \end{align}
  Exact evaluation of the exact local sensitivity is intractable.
  Instead, the following theorem gives the upper bound of the local sensitivity.
 \begin{theorem}\label{LS}
  Given $X$, the local sensitivity of $q_{count}$ for $X$ is bounded above as
    \begin{align}
     \label{LS-eq}
     LS_{q_{count}}^{(0)}(X,  k, r) \leq \max\left\{ \max_{\bm{x} \in X}\{ |CV(X, \bm{x}, k, r)|  \},  \max_{\bm{x} \in \mathcal{D}}\{ |CV(X, \bm{x}, k-1, r)| \} \right\} + 1.
    \end{align}
 \end{theorem}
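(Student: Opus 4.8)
The plan is to analyze how the outlier count can change when a single record of $X$ is replaced. Since $|X| = |X_1| = N$ and $H(X, X_1) = 1$, any neighbor $X_1$ is obtained by deleting one existing point $\bm{p} \in X$ and inserting one new point $\bm{q} \in \mathcal{D}$, so $X_1 = (X \setminus \{\bm{p}\}) \cup \{\bm{q}\}$. First I would record the key observation that the degree (eq.~(\ref{degree-eq})) of a point $\bm{x} \in X \setminus \{\bm{p}\}$ changes only through its adjacency to $\bm{p}$ and $\bm{q}$: it drops by one if $\bm{x}$ was a neighbor of $\bm{p}$ but is not a neighbor of $\bm{q}$, it rises by one if $\bm{x}$ is a neighbor of $\bm{q}$ but was not a neighbor of $\bm{p}$, and it is unchanged otherwise, in particular when $\bm{x}$ is adjacent to both. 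Consequently the only records whose outlier status can flip are those lying in exactly one of the two balls $B(\bm{p}, r)$, $B(\bm{q}, r)$, together with the deleted point $\bm{p}$ and the inserted point $\bm{q}$ themselves.

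Next I would classify the possible status flips by the direction in which they move the count. A point whose degree drops by one can turn from an inlier into an outlier only if its degree in $X$ was exactly $k$; by eq.~(\ref{cv-eq}) the number of such points is at most $|CV(X, \bm{p}, k, r)|$, the number of degree-$k$ points within radius $r$ of $\bm{p}$. Symmetrically, a point whose degree rises by one can turn from an outlier into an inlier only if its degree in $X$ was exactly $k-1$, and the number of such points is at most $|CV(X, \bm{q}, k-1, r)|$. The deleted point contributes $-1$ to the count precisely when $\bm{p}$ is an outlier of $X$, and the inserted point contributes $+1$ precisely when $\bm{q}$ is an outlier of $X_1$.

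The crucial point is that inlier-to-outlier flips and outlier-to-inlier flips move the count in opposite directions, so they cannot reinforce each other; hence I bound the positive and negative deviations separately rather than summing all flips. Writing $\Delta = q_{count}(X_1, k, r) - q_{count}(X, k, r)$, the positive part is maximised by discarding the outlier-to-inlier flips and the $-1$ from $\bm{p}$, giving $\Delta \le |CV(X, \bm{p}, k, r)| + 1 \le \max_{\bm{x} \in X} |CV(X, \bm{x}, k, r)| + 1$, where the $+1$ is the at-most-one contribution of the inserted point $\bm{q}$ and where $\bm{p} \in X$ forces the maximum to range over $X$. The negative part is maximised symmetrically, yielding $-\Delta \le |CV(X, \bm{q}, k-1, r)| + 1 \le \max_{\bm{x} \in \mathcal{D}} |CV(X, \bm{x}, k-1, r)| + 1$, the range now being all of $\mathcal{D}$ because the inserted point may be any point of the domain. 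Combining the two bounds and maximising over all neighbors $X_1$ yields eq.~(\ref{LS-eq}).

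I expect the main obstacle to be the bookkeeping around records that are simultaneously neighbors of $\bm{p}$ and of $\bm{q}$, which must be excluded from both flip counts because their degree is unchanged, and the careful treatment of whether $\bm{p}$ itself lies in $CV(X, \bm{p}, k, r)$, an issue that only loosens the stated bound and can therefore be absorbed safely. One should also check that the two favourable configurations are actually realisable, for instance by placing $\bm{p}$ and $\bm{q}$ far apart so that their neighborhoods are disjoint, confirming that the bound is not vacuously loose, though this is not needed for the inequality itself.
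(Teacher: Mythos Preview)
Your proposal is correct and follows essentially the same approach as the paper: view the neighboring database as a deletion of some $\bm{p}\in X$ followed by an insertion of some $\bm{q}\in\mathcal{D}$, identify $CV(X,\bm{p},k,r)$ and $CV(X,\bm{q},k-1,r)$ as the sets of records whose outlier status can flip in each direction, and bound the signed change by the larger of the two plus one for the moved point itself. Your version is in fact more explicit than the paper's, which compresses the two one-sided bounds into a single set-difference step; your observation that the two flip types carry opposite signs and hence should be bounded separately is exactly what underlies that step.
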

\begin{proof}
Intuitively, $CV(X_0, \bm{x}, k, r)$ represents the set of non-outliers that become outliers if $\bm{x}$ is removed; $CV(X_0, \bm{x}, k-1, r)$ is the set of outliers that become inliers if a vector is placed at $\bm{x}$. Thus, if vector $\bm{x}_0 \in X_0$ is moved to $\bm{x}'_0$, the number of outliers increases by $|CV(X_0, \bm{x}_0, k, r)|$ by removing $\bm{x}_0$ and the number of inliers decreases by $|CV(X_0, \bm{x}'_0, k-1, r)|$ by adding $\bm{x}'_0$. With this understanding, the local sensitivity is given as:
\begin{align}
 & LS_{q_{count}}^{(0)}(X_0, k, r) \\
=& \max_{X_1:H(X_0,X_1)=1} \|q_{count}(X_0,k,r) - q_{count}(X_1,k,r) \| \\
 \leq& \max_{\bm{x}_0 \in X_0, \bm{x}'_0 \in S} |CV(X_0, \bm{x}_0, k, r) \setminus CV(X_0, \bm{x}'_0, k-1, r)|  + 1  \\
 \leq&  \max_{\bm{x}_0 \in X_0, \bm{x}'_0 \in S} \max\left\{|CV(X_0, \bm{x}_0, k, r)|, |CV(X_0, \bm{x}'_0, k-1, r)| \right\} + 1  \\
 =& \max\left\{ \max_{\bm{x} \in X_0}\{CV(X_0, \bm{x}, k, r)  \},  \max_{\bm{x}'_0 \in S}\{ CV(X_0, \bm{x}, k-1, r)\} \right\} + 1.
\end{align}
\end{proof}

\noindent Naive evaluation of the local sensitivity is intractable. An algorithm to evaluate this upper bound is presented in Section \ref{subsec:efficientComputation}.

\subsubsection{Smooth Sensitivity}
\label{subsubsec:qcount-SS}
Given database $X$, let $X_t$ be a database s.t. $H(X_0, X_t)=t$.
By definition, the smooth sensitivity of $q_{count}$ is given as
 \begin{align}
  S^*_{q_{count}}(X) = \max_{t=0,1,\hdots, n} e^{-t\epsilon} LS^{(t)}_{q_{count}}(X) ,
 \end{align}
 where
 \begin{align}
  LS^{(t)}_{q_{count}}(X) = \max_{X_t: H(X,X_t) = t} LS^{(0)}_{q_{count}}(X_t).
 \end{align}
 Here, $X_t$ denotes a database s.t. $H(X,X_t)=t$. The function $LS^{(t)}_q(X)$ returns the largest local sensitivity among the datasets of which $t$ records differ from $X$. Similarly to $LS^{(0)}_{q_{count}}(X)$, exact evaluation of $LS^{(t)}_{q_{count}}(X)$ is intractable because the variation of $X_t$ can increase exponentially with respect to $t$. Instead, we derive the upper bound on $LS^{(t)}_{q_{count}}(X)$ using $CV(X,\bm{x}, k,r)$.
 \begin{theorem}
  \label{thm:ls-a-bound}
  Given $X$, for $t \ge 0$, $LS^{(t)}_{q_{count}}(X)$ is bounded above as
  \begin{align}
   LS^{(t)}_{q_{count}}(X) \le \max_{\bm{x} \in \mathcal{D}}\left\{ \max\{C^{(t)}(X,\bm{x},k,r), C^{(t)}(X,\bm{x},k-1,r)\} + t + 1 \right\}, \label{eq:ls-t-upper-bound}
  \end{align}
  where
  \begin{align}
   C^{(t)}(X,\bm{x},k,r) = \left| \bigcup_{i = -t}^t CV(X,\bm{x},k+i,r) \right|.
  \end{align}
 \end{theorem}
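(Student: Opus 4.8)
The plan is to reduce everything to the single-dataset bound of Theorem~\ref{LS} applied at an arbitrary $t$-neighbor $X_t$, and then to control the quantities $|CV(X_t,\bm{x},k,r)|$ that appear there using only data measured on the original $X$. Since $LS^{(t)}_{q_{count}}(X) = \max_{X_t:H(X,X_t)=t} LS^{(0)}_{q_{count}}(X_t)$, it suffices to prove the claimed inequality for every fixed $X_t$ with $H(X,X_t)=t$ and then take the maximum. For such an $X_t$, applying Theorem~\ref{LS} and relaxing $\max_{\bm{x}\in X_t}$ to $\max_{\bm{x}\in\mathcal{D}}$ (which only enlarges the bound) gives
\begin{align}
 LS^{(0)}_{q_{count}}(X_t) \le \max\left\{ \max_{\bm{x} \in \mathcal{D}} |CV(X_t, \bm{x}, k, r)|,\ \max_{\bm{x} \in \mathcal{D}} |CV(X_t, \bm{x}, k-1, r)| \right\} + 1,
\end{align}
so the whole task is to show, for every center $\bm{x}$ and every degree level, that $|CV(X_t,\bm{x},k,r)| \le C^{(t)}(X,\bm{x},k,r) + t$.

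The key structural observation I would use is a \emph{degree-stability} fact: modifying $t$ records changes the degree of any fixed unmodified point by at most $t$. Indeed, a single record modification replaces one point $\bm{a}$ by another point $\bm{b}$, which can alter the neighbor status of a fixed unmodified $\bm{y}$ only with respect to $\bm{a}$ and with respect to $\bm{b}$; hence $deg(\bm{y})$ changes by at most one per modification, so over $t$ modifications $|deg_X(\bm{y})-deg_{X_t}(\bm{y})|\le t$. I would then partition the records of $X_t$ into the unmodified ones (which also belong to $X$) and the at most $t$ modified ones. Every unmodified $\bm{y} \in CV(X_t,\bm{x},k,r)$ lies in $B(\bm{x},r)$ and satisfies $deg_{X_t}(\bm{y})=k$; by degree-stability its degree in $X$ is $k+i$ for some $i\in\{-t,\dots,t\}$, so $\bm{y} \in \bigcup_{i=-t}^{t} CV(X,\bm{x},k+i,r)$. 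Thus the unmodified contribution is at most $C^{(t)}(X,\bm{x},k,r)$ and the modified contribution is at most $t$, which yields the desired bound on $|CV(X_t,\bm{x},k,r)|$; the same argument with $k-1$ in place of $k$ handles the second term.

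Substituting these two bounds back into the display, pulling the constant $+t$ out of each inner maximum, and using the elementary identity $\max\{\max_{\bm{x}}A(\bm{x}),\max_{\bm{x}}B(\bm{x})\}=\max_{\bm{x}}\max\{A(\bm{x}),B(\bm{x})\}$, I obtain
\begin{align}
 LS^{(0)}_{q_{count}}(X_t) \le \max_{\bm{x} \in \mathcal{D}} \left\{ \max\{C^{(t)}(X,\bm{x},k,r),\ C^{(t)}(X,\bm{x},k-1,r)\} + t + 1 \right\}.
\end{align}
Because the right-hand side no longer depends on $X_t$, taking the maximum over all $X_t$ with $H(X,X_t)=t$ produces exactly \eqref{eq:ls-t-upper-bound}. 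The step I expect to be the main obstacle is making the degree-stability argument fully rigorous while correctly accounting for points that move \emph{onto} or \emph{off of} the ball $B(\bm{x},r)$: one must ensure that each modified record is charged to the additive ``$+t$'' slack rather than silently assumed to retain its degree, and that letting the center $\bm{x}$ range over all of $\mathcal{D}$ (not merely over $X_t$) is what makes the final bound uniform in the choice of $X_t$.
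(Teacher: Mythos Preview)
Your proposal is correct and follows essentially the same route as the paper: the paper isolates your ``degree-stability'' step as a separate lemma (Lemma~\ref{lem:a-neibhor-sub}) showing $|CV(X_t,\bm{x},k,r)| \le C^{(t)}(X,\bm{x},k,r)+t$ via exactly the argument you describe (unmodified points shift degree by at most $t$, modified points are absorbed into the $+t$), and then plugs it into Theorem~\ref{LS} after the same relaxation $\max_{\bm{x}\in X_t}\le\max_{\bm{x}\in\mathcal{D}}$. The only cosmetic difference is that the paper names intermediate quantities $C^{(t)}_{\rm out}$ and $C^{(t)}_{\rm in}$ before combining the two maxima, whereas you do it in one line.
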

 For the proof of this theorem, we use the following helper lemma.

 \begin{lemma}
  \label{lem:a-neibhor-sub}
  Let $t \ge 0$ be an integer, and let $X$ and $X_t$ be databases such that $H(X,X_t) = t$. Then, for any $\bm{x} \in \mathcal{D}$, threshold $k$, and radius $r$,
  \begin{align}
   |CV(X_t,\bm{x},k,r)| \le \left| \bigcup_{i = -t}^t CV(X,\bm{x},k+i,r) \right| + t.
  \end{align}
 \end{lemma}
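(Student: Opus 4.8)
The plan is to bound $|CV(X_t,\bm{x},k,r)|$ by splitting the points of $X_t$ according to whether the corresponding record was altered in passing from $X$ to $X_t$. Since $H(X,X_t)=t$, exactly $t$ of the $N$ indexed records differ; I will call these the \emph{modified} records and the remaining $N-t$ records the \emph{unchanged} records, each of which occupies an identical location in $X$ and in $X_t$. By definition a point $\bm{y}\in CV(X_t,\bm{x},k,r)$ is a record of $X_t$ with $\bm{y}\in B(\bm{x},r)$ and $deg(X_t,r,\bm{y})=k$. First I would partition $CV(X_t,\bm{x},k,r)$ into its unchanged part and its modified part, and bound the former by $|\bigcup_{i=-t}^{t} CV(X,\bm{x},k+i,r)|$ and the latter by $t$.

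The key ingredient is a degree-stability estimate: for any fixed point $\bm{y}$, the values $deg(X,r,\bm{y})$ and $deg(X_t,r,\bm{y})$ differ only through the contributions of the $t$ modified records, and each such record can change $\bm{y}$'s neighbor count by at most one (it may enter or leave the ball $B(\bm{y},r)$, but not both). Hence $|deg(X,r,\bm{y})-deg(X_t,r,\bm{y})|\le t$ for every unchanged $\bm{y}$. Applying this to an unchanged point $\bm{y}\in CV(X_t,\bm{x},k,r)$, its degree in $X$ equals $k+i$ for some $i\in\{-t,\dots,t\}$; and since $\bm{y}$ keeps the same location, $\bm{y}\in B(\bm{x},r)$ holds in $X$ too. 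Therefore $\bm{y}\in B(\bm{x},r)\cap V(X,k+i,r)=CV(X,\bm{x},k+i,r)$, so the identity map embeds the unchanged part of $CV(X_t,\bm{x},k,r)$ injectively into $\bigcup_{i=-t}^{t} CV(X,\bm{x},k+i,r)$, bounding that part by the size of the union.

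It remains to handle the modified part. As there are only $t$ modified records in total, at most $t$ of them can lie in $CV(X_t,\bm{x},k,r)$, irrespective of their locations or degrees. Summing the two bounds yields $|CV(X_t,\bm{x},k,r)| \le |\bigcup_{i=-t}^{t} CV(X,\bm{x},k+i,r)| + t$, which is exactly the claim.

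I expect the degree-stability bookkeeping to be the delicate step. Working in the indexed-list model is important here: it lets me match each unchanged record of $X$ canonically with the same record of $X_t$, so that the embedding is genuinely injective and so that ball membership and the degree bound of $t$ are both computed at a single physical location. Once this identification is pinned down, verifying that a record contributes at most one to the difference of degrees, and that the modified records number at most $t$, is routine counting.
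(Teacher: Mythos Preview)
Your proposal is correct and follows essentially the same approach as the paper: both partition the records of $X_t$ into the (at most $t$) modified records and the unchanged ones, use the degree-stability bound $|deg(X,r,\bm{y})-deg(X_t,r,\bm{y})|\le t$ to place each unchanged degree-$k$ point of $X_t$ into some $V(X,k+i,r)$ with $|i|\le t$, and then add $t$ for the modified records. The only cosmetic difference is that the paper first establishes $V(X_t,k,r)\subseteq\bigcup_{i=-t}^{t}V(X,k+i,r)\cup\{\bm{x}_1,\dots,\bm{x}_t\}$ and then intersects with $B(\bm{x},r)$, whereas you work directly at the level of $CV$; the content is the same.
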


\begin{proof}[Proof of Lemma \ref{lem:a-neibhor-sub}]
We first consider the case $t = 1$. Suppose $\bm{x} \in X$ is moved from $\bm{x}$ to $\bm{x}_1$, and $X_1$ is given as $X_1 = X\setminus\{\bm{x}\}\cup\{\bm{x}_1\}$. The degree of records in $X\setminus\{\bm{x}\}$ around $\bm{x}$ is decreased by one by removing $\bm{x}$, and the degree of records in $X\setminus\{\bm{x}\}$ around $\bm{x}_1$ is increased by one by adding $\bm{x}_1$. Since the degree of the records in $V(X,k+1,r)$ and $V(X,k-1,r)$ may become $k$ in $X_1$, $V(X_1,k,r)$ is thus a subset of $V(X,k+1,r) \cup V(X,k,r) \cup V(X,k-1,r) \cup \{x_1\}$. When $t > 1$, for the same reason, $V(X_t,k,r)$ is a subset of $\bigcup_{i=-t}^t V(X,k+i,r) \cup \{\bm{x}_1, \bm{x}_2, ..., \bm{x}_t \}$ where $\bm{x}_1, ..., \bm{x}_t$ are the records moved from $X$ to $X_t$. Thus, the size of $CV(X_t, \bm{x},r,k)$ is bounded above as
\begin{align}
  |CV(X_t, \bm{x},r,k)| \le& \left| B(\bm{x}, r) \cap \bigcup_{i=-t}^t V(X,k+i,r) \cup \{\bm{x}_1, \bm{x}_2, ..., \bm{x}_t \} \right| \\
  \le& \left| \bigcup_{i=-t}^t B(\bm{x}, r) \cap V(X,k+i,r) \right| + |\{\bm{x}_1, \bm{x}_2, ..., \bm{x}_t \} | \\
  \le& \left| \bigcup_{i=-t}^t CV(X,\bm{x},k+i,r) \right| + t.
\end{align}
\end{proof}

By using Lemma \ref{lem:a-neibhor-sub}, we now prove the Theorem \ref{thm:ls-a-bound}.
\begin{proof}[Proof of Theorem \ref{thm:ls-a-bound}]
As proved in Theorem \ref{LS}, the local sensitivity of the query $q_{count}$ is bounded above by
\begin{align}
  LS^{(0)}_{q_{count}}(X) \le \max\{\max_{\bm{x} \in X}|CV(X,\bm{x},r,k)|, \max_{\bm{x} \in \mathcal{D}}| CV(X,\bm{x},k-1,r)| \} + 1.
\end{align}
From exchangeability of $\max$, letting
 \begin{align}
  C^{(t)}_{\rm out}(X,k,r) =& \max_{X_t:H(X,X_t) = t}\max_{\bm{x} \in X_t}|CV(X_t,\bm{x},r,k)|, \mbox{ and } \\
  C^{(t)}_{\rm in}(X,k-1,r) =& \max_{X_t:H(X,X_t) = t}\max_{\bm{x} \in \mathcal{D}}|CV(X_t,\bm{x},r,k-1)|,
 \end{align}
 yields
 \begin{align}
  LS^{(t)}_{q_{count}}(X) \le \max\{ C^{(t)}_{\rm out}(X,k,r), C^{(t)}_{\rm in}(X,k-1,r) \} + 1. \label{eq:ls-a-c-bound}
 \end{align}
 $C^{(t)}_{\rm out}(X,k,r)$ can be bounded above using $C^{(t)}_{\rm in}(X,k-1,r)$ as
 \begin{align}
  C^{(t)}_{\rm out}(X,k,r) =& \max_{X_t:H(X,X_t) = t} \max_{\bm{x} \in X_t}|CV(X_t,\bm{x},r,k)| \\
  \le& \max_{X_t:H(X,X_t) = t} \max_{\bm{x} \in \mathcal{D}}|CV(X_t,\bm{x},r,k)| \\
  =& C^{(t)}_{\rm in}(X,k,r). \label{eq:cout-cin}
 \end{align}
Here, we use the fact that $X' \subset \mathcal{D}$. By Lemma \ref{lem:a-neibhor-sub}, we have
 \begin{align}
  C^{(t)}_{\rm in}(X,k,r) \le& \max_{\bm{x} \in \mathcal{D}} \left| \bigcup_{i = -t}^t CV(X,\bm{x},k+i,r) \right| + t . \label{eq:cin-bound}
 \end{align}
By substituting eqs. \ref{eq:cout-cin} and \ref{eq:cin-bound} into eq. (\ref{eq:ls-a-c-bound}), we get the claim.
\end{proof}

\subsection{Efficient Computation of Smooth Sensitivity Bound}
\label{subsec:efficientComputation}

For randomization by the mechanism of Theorem \ref{LM}, it is necessary to evaluate the smooth upper bound. Naive evaluation of the smooth upper bound of eq. (\ref{eq:ls-t-upper-bound}) is intractable because it requires an exhaustive search over continuous domain to evaluate $LS^{(t)}_{q_{count}}(X)$. To alleviate this, we first show an efficient algorithm that evaluates the upper bound of $LS^{(t)}_{q_{count}}(X)$ shown derived by Theorem \ref{thm:ls-a-bound}. Then using the algorithm, we derive the algorithm that calculates the smooth sensitivity upper bound.

\subsubsection{Algorithm for local sensitivity bound} 
To evaluate the upper bound of $LS^{(t)}_{q_{count}}(X)$, we need to calculate
  \begin{align}
   \max_{\bm{x} \in \mathcal{D}} C^{(t)}(X,\bm{x},k,r) =& \max_{\bm{x} \in \mathcal{D}} \left| \bigcup_{i = -t}^t V(X,k+i,r) \cap B(\bm{x},r) \right|, \mbox{ and} \label{eq:ct-k} \\
   \max_{\bm{x} \in \mathcal{D}} C^{(t)}(X,\bm{x},k-1,r) =& \max_{\bm{x} \in \mathcal{D}} \left| \bigcup_{i = -t}^t V(X,k+i-1,r) \cap B(\bm{x},r) \right|. \label{eq:ct-k-1}
  \end{align}

  Letting $P = \bigcup_{i = -t}^t V(X,k+i,r)$ (resp. $P = \bigcup_{i = -t}^t V(X,k+i-1,r)$), we can obtain the value of eq.(\ref{eq:ct-k}) (resp. eq.(\ref{eq:ct-k-1})) by finding the largest subset $C \subseteq P$ that is enclosed by a ball with radius $r$. To check whether or not a given subset $C \subseteq P$ is enclosed by the ball, we use the algorithm that solves the {\em smallest enclosing ball}~(seb) problem~\cite{Fischer03fastsmallest-enclosing-ball}. The goal of the problem is to find the smallest ball that encloses the given points.
  The given subset $C \subseteq P$ is enclosed by a ball with radius $r$ if ${\rm seb}(C) \le r$ where ${\rm seb}(C)$ denotes the radius of the resultant ball of the smallest enclosing ball problem of $C$.

  Algorithm \ref{enumerate-algo} shows the recursive algorithm that calculates  eq. (\ref{eq:ct-k}) or eq. (\ref{eq:ct-k-1}) for given $P = \bigcup_{i = -t}^t V(X,k+i,r)$ or $P = \bigcup_{i = -t}^t V(X,k+i-1,r)$. $P[i]$ denotes the $i$-th element of the set $P$. Algorithm \ref{enumerate-algo} searches for the largest subsets $C \subseteq P$ that is enclosed by a ball with radius $r$ with the breadth-first search. In the algorithm, the calls of ${\rm seb}$ can be skipped for efficiency by using the fact that the radius of the enclosing ball of $C_2$ is larger than one of $C_1$ if $C_1 \subseteq C_2 \subseteq P$.
The computational cost of Algorithm~\ref{enumerate-algo} is $\mathcal{O}(2^{|P|})$ of the calls of ${\rm seb}$.

\begin{algorithm}[t]
 \caption{Calculation of $\max_{\bm{x} \in \mathcal{D}} C^{(t)}(X,\bm{x},k,r)$(eq.~{\protect(\ref{eq:ct-k})} and eq.~{\protect(\ref{eq:ct-k-1}))}}\label{enumerate-algo}
 \SetKwProg{Fn}{Function}{}{end}
 \SetKwFunction{FE}{E}%
 \SetKwInOut{KwInit}{Initialization}
 \DontPrintSemicolon
 \KwIn{Records $P$ and radius $r$.}
 \KwOut{The value of eq.(\ref{eq:ct-k}) or eq.(\ref{eq:ct-k-1}).}
 \KwInit{$C = \emptyset$ and $i = 1$}
 \Fn{\FE{$r, P, C, i$}}{
 $br \gets 0$ \;
 \If{$C \neq \emptyset$}{
 $br \gets {\rm seb}(C)$ \;
 }
 \If{$br \leq r$}{
 $m \gets |C|$ \;
 \If{$i \leq |P|$}{
 $b_1 \gets$ \FE{$r, P, C \cup \{P[i]\}, i+1$} \;
 $b_2 \gets$ \FE{$r, P, C, i+1$} \;
 $m \gets \max\{m, b_1, b_2\}$ \;
 }
 \Return{$m$}
 }\Else{
 \Return{$0$}
 }
 }
\end{algorithm}

\subsubsection{Algorithm for smooth sensiticity bound.}
Algorithm~\ref{enumerate-algo} costs exponential time with respect to $|P|$ and the size of $P$ increases monotonically as $t$ increases.
However, because of exponential decrease of $e^{-t\beta}$, maximization of $e^{-t\beta}LS^{(t)}_{q_{count}}(X)$ is attained by small $t$ in most cases.
Taking account of this property, we provide Algorithm \ref{alg:efficient} that calculates the smooth sensitivity bound with avoiding evaluation of $LS^{(t)}_{q_{count}}(X)$ of large $t$.
  \begin{proposition}
    \label{prop:lst-bound}
    For any $t$ and $t' < t$, $LS^{(t)}_{q_{count}}$ is bounded above as
    \begin{align}
      LS^{(t)}_{q_{count}}(X) \le \min\{N, \max\{U^{(t)}_{t'}(X,k,r), U^{(t)}_{t'}(X,k-1,r)\} + t + 1\},
    \end{align}
    where
    \begin{align}
      U^{(t)}_{t'}(X,k,r) = \max_{\bm{x} \in \mathcal{D}} C^{(t')}(X,\bm{x},k,r) + \left|\bigcup_{i \in \{-t,...,-t'-1\} \cup \{t'+1,...,t\}} V(X,k+i,r) \right|.
    \end{align}
  \end{proposition}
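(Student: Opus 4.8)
The plan is to obtain this bound as a direct refinement of Theorem~\ref{thm:ls-a-bound} by splitting the index range $\{-t,\ldots,t\}$ appearing in $C^{(t)}$ into an inner block $\{-t',\ldots,t'\}$ and the two outer blocks $\{-t,\ldots,-t'-1\}$ and $\{t'+1,\ldots,t\}$, and then bounding the outer contribution crudely so that it no longer depends on $\bm{x}$. First I would recall that, by the definition of $CV$ and $C^{(t)}$, we have $C^{(t)}(X,\bm{x},k,r) = \left| B(\bm{x},r) \cap \bigcup_{i=-t}^{t} V(X,k+i,r) \right|$. Decomposing the union into the inner and outer blocks and using subadditivity of cardinality over a union gives
\begin{align}
  C^{(t)}(X,\bm{x},k,r) \le \left| B(\bm{x},r) \cap \bigcup_{i=-t'}^{t'} V(X,k+i,r)\right| + \left| B(\bm{x},r) \cap \bigcup_{i \in \{-t,\ldots,-t'-1\}\cup\{t'+1,\ldots,t\}} V(X,k+i,r) \right|.
\end{align}
The first term is exactly $C^{(t')}(X,\bm{x},k,r)$, while in the second term I would simply drop the intersection with $B(\bm{x},r)$, which can only decrease the cardinality, leaving the $\bm{x}$-independent quantity $\left|\bigcup_{i \in \{-t,\ldots,-t'-1\}\cup\{t'+1,\ldots,t\}} V(X,k+i,r)\right|$.

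Next I would maximize over $\bm{x}\in\mathcal{D}$. Because the outer-block term is constant in $\bm{x}$, the maximum distributes as $\max_{\bm{x}} C^{(t)}(X,\bm{x},k,r) \le U^{(t)}_{t'}(X,k,r)$, and the identical argument with $k$ replaced by $k-1$ gives $\max_{\bm{x}} C^{(t)}(X,\bm{x},k-1,r) \le U^{(t)}_{t'}(X,k-1,r)$. I would then feed these into Theorem~\ref{thm:ls-a-bound}: pulling the $\bm{x}$-independent summand $t+1$ out of the outer maximum and exchanging the maximum over $\bm{x}$ with the inner maximum over the two cardinality terms yields $LS^{(t)}_{q_{count}}(X) \le \max\{U^{(t)}_{t'}(X,k,r), U^{(t)}_{t'}(X,k-1,r)\} + t + 1$.

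Finally I would intersect this with the trivial bound $LS^{(t)}_{q_{count}}(X)\le N$, which holds because $q_{count}$ takes values in $[0,N]$ so any difference of counts, and hence any local sensitivity, is at most $N$. Taking the minimum of the two upper bounds gives precisely the stated inequality. The argument is elementary set bookkeeping rather than anything deep; the only point that warrants care, and thus the main (minor) obstacle, is justifying the interchange of the maximum over $\bm{x}$ with the maximum over the two cardinality terms, using that $\max_{\bm{x}}\max\{a(\bm{x}),b(\bm{x})\}=\max\{\max_{\bm{x}}a(\bm{x}),\max_{\bm{x}}b(\bm{x})\}$ together with the fact that the dropped outer term does not depend on $\bm{x}$.
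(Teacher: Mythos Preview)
Your proposal is correct and matches the paper's own (sketch of) proof: both rely on Theorem~\ref{thm:ls-a-bound}, the inclusion $CV(X,\bm{x},k+i,r)\subseteq V(X,k+i,r)$ on the outer indices (equivalently, dropping $B(\bm{x},r)$), and the trivial bound $LS^{(t)}_{q_{count}}(X)\le N$ from $q_{count}\in[0,N]$. Your write-up simply fills in the details the paper leaves implicit, including the interchange-of-maxima step, so there is nothing substantively different to compare.
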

  \begin{proof}[Sketch of proof.]
  For any database $X$, because the number of outliers does not exceed the number of the records in $X$, the local sensitivity is less than $N$. In addition, using the fact that $CV(X,\bm{x},k,r) \subseteq V(X,k,r)$ for any $\bm{x} \in \mathcal{D}$, we can derive $\max_{\bm{x} \in \mathcal{D}} C^{(t)}(X,\bm{x},k,r) \le U^{(t)}_{t'}(X,k,r)$ for any $t$ and $t' < t$.
  \end{proof}
  Using the bound in Proposition \ref{prop:lst-bound}, we have the upper bound of $e^{-t\beta}LS^{(t)}_{q_{count}}(X)$ as
  \begin{align}
   e^{-t\beta}LS^{(t)}_{q_{count}}(X) \le& e^{-t\beta}\min\{N, \max\{U^{(t)}_{t'}(X,k,r), U^{(t)}_{t'}(X,k-1,r)\} + t + 1\} \\
    :=& S^{t',t}_{\rm UB}(X).
  \end{align}
  Letting $S^t_{\rm UB}(X) = \max_{i = 1,...,N-t} S^{t,t+i}_{\rm UB}(X)$, we can obtain the following proposition.
 \begin{proposition}\label{proposition:upper-prune}
  If there exists $U_T$ such that $\max_{t = 0,...,T} e^{-t\beta}LS^{(t)}_{q_{count}}(X) \le U_T$ and $S^{T}_{\rm UB}(X) \le U_T$, then $S^*_{q_{count}}(X) \le U_T$.
 \end{proposition}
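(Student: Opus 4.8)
The plan is to split the maximization that defines the smooth sensitivity $S^*_{q_{count}}(X) = \max_{t=0,\ldots,N} e^{-t\beta} LS^{(t)}_{q_{count}}(X)$ at the threshold index $T$, and to bound the ``head'' $t \le T$ and the ``tail'' $t > T$ by the two separate hypotheses. Concretely, I would write
\[
S^*_{q_{count}}(X) = \max\Bigl\{ \max_{t=0,\ldots,T} e^{-t\beta} LS^{(t)}_{q_{count}}(X),\ \max_{t=T+1,\ldots,N} e^{-t\beta} LS^{(t)}_{q_{count}}(X) \Bigr\},
\]
so that bounding each of the two inner maxima by $U_T$ immediately yields the claim $S^*_{q_{count}}(X) \le U_T$. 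The head is handled directly by the first assumption, which states $\max_{t=0,\ldots,T} e^{-t\beta} LS^{(t)}_{q_{count}}(X) \le U_T$, so no further work is needed there.

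For the tail, the key step is to show $\max_{t=T+1,\ldots,N} e^{-t\beta} LS^{(t)}_{q_{count}}(X) \le S^T_{\rm UB}(X)$, after which the second assumption $S^T_{\rm UB}(X) \le U_T$ closes the argument. To establish this, I would fix any $t$ with $T < t \le N$ and apply Proposition \ref{prop:lst-bound} with the choice $t' = T < t$, giving $LS^{(t)}_{q_{count}}(X) \le \min\{N, \max\{U^{(t)}_{T}(X,k,r), U^{(t)}_{T}(X,k-1,r)\} + t + 1\}$. Multiplying both sides by $e^{-t\beta}$ yields exactly $e^{-t\beta} LS^{(t)}_{q_{count}}(X) \le S^{T,t}_{\rm UB}(X)$ by the definition of $S^{t',t}_{\rm UB}$. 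Since $S^T_{\rm UB}(X) = \max_{i=1,\ldots,N-T} S^{T,T+i}_{\rm UB}(X)$ ranges precisely over $S^{T,t}_{\rm UB}(X)$ for $t = T+1,\ldots,N$, taking the maximum over $t$ in this range delivers the desired tail bound.

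Combining the two pieces, both inner maxima are at most $U_T$, and hence $S^*_{q_{count}}(X) \le U_T$. The argument is essentially a bookkeeping exercise, and I do not anticipate a serious obstacle: the only point that warrants care is confirming that the index set $\{T+i : i = 1,\ldots,N-T\}$ appearing in the definition of $S^T_{\rm UB}(X)$ coincides with the tail range $\{T+1,\ldots,N\}$, so that the per-level bounds from Proposition \ref{prop:lst-bound} are all subsumed by $S^T_{\rm UB}(X)$. All the genuine content is inherited from Proposition \ref{prop:lst-bound}, which is what makes the per-level estimate valid for every $t' < t$ and, in particular, for the single already-computed value $t' = T$; this is precisely what lets the algorithm prune without ever evaluating $LS^{(t)}_{q_{count}}(X)$ for the large values of $t$.
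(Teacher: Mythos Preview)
Your proposal is correct and follows essentially the same approach as the paper: split $S^*_{q_{count}}(X)$ at $T$, bound the head by the first hypothesis, and for the tail invoke the per-level estimate $e^{-t\beta}LS^{(t)}_{q_{count}}(X) \le S^{T,t}_{\rm UB}(X)$ (which is exactly Proposition~\ref{prop:lst-bound} with $t'=T$) and then the second hypothesis. Your write-up is in fact more explicit than the paper's about where Proposition~\ref{prop:lst-bound} enters and about matching the index ranges, but the argument is the same.
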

\begin{proof}
If $S^T_{\rm UB}(X) = \max_{i = 1,...,N-T} S^{T, T+i}_{\rm UB}(X) \le U_T$, since $e^{-t\beta}LS^{(t)}_{q_{count}}(X) \le S^{T,t}_{\rm UB}(X)$ for any $t > T$, we have $e^{-t\beta}LS^{(t)}_{q_{count}}(X) \le U_T ~, \forall t > T$. Thus, we have $\max_{t = 0,...,T} e^{-t\beta}LS^{(t)}_{q_{count}}(X) \le U_T$ and $\max_{t > T} e^{-t\beta}LS^{(t)}_{q_{count}}(X) \le U_T$.
\end{proof}

\noindent Proposition~\ref{proposition:upper-prune} shows that if the largest upper bound in Theorem \ref{thm:ls-a-bound} for $t = 0, ..., T$ can be bounded above by $S^T_{\rm UB}(X) $, then the calculation of the upper bound in Theorem \ref{thm:ls-a-bound} for $t > T$ can be skipped. Algorithm \ref{alg:efficient} shows the calculation of the smooth sensitivity of $q_{count}$ with this skip by following Proposition~\ref{proposition:upper-prune}.

\begin{algorithm}[t]
 \caption{Calculation of the smooth sensitivity of $q_{count}$}
 \label{alg:efficient}
 \SetKwInOut{KwInit}{Initialization}
 \DontPrintSemicolon
 \KwIn{Database $X$, threshold $k$, radius $r$ and smooth parameter $\epsilon$.}
 \KwOut{The smooth sensitivity upper bound of query $q_{count}$ for database $X$.}
 \KwInit{$S_{\max} = 0$ and $\max_{\bm{x}\in\mathcal{D}} C^{(-1)}(X,\bm{x},k,r) = \max_{\bm{x}\in\mathcal{D}} C^{(-1)}(X,\bm{x},k-1,r) = 0$.}
 \For{$t = 0$ to $N$}{
  Calculate $S^{t-1}_{\rm UB}$ by Proposition~\ref{proposition:upper-prune} \;
  \If{$S^{t-1}_{\rm UB} \le S_{\max}$}{
   \Return{$S_{\max}$}
  }
  $S_{\max} \gets \max\{S_{\max}, e^{-t\beta}LS^{(t)}_{q_{count}}(X) \}$ \;
  Store $\max_{\bm{x}\in\mathcal{D}} C^{(t)}(X,\bm{x},k,r)$ and $\max_{\bm{x}\in\mathcal{D}} C^{(t)}(X,\bm{x},k-1,r)$ for calculating $S^{t}_{\rm UB}$ in next loop
 }
 \Return{$S_{\max}$}
\end{algorithm}

\section{Differentially Private Discovery and Detection}
 \label{sec:discovery_detection}
 We are able to get the number of outliers in the database while ensuring $(\epsilon, \delta)$-differential mechanism by previous technique. Next, we try to achieve analyzing like Scenario 2 and 3. We descrive how to achieve Scenario 2 in Section~\ref{subsec:discovery} and Scenario 3 in Section~\ref{subsec:detection}.

  \subsection{Top-$h$ Subspace Discovery with Exponential Mechanism}
  \label{subsec:discovery}
 This section investigates differential privacy for finding subspaces that contains outliers.
 As already discussed, identification of outliers and protecting privacy of instances are intrinsically incompatible. However, if the interest of the analyst is simply to learn the situations that the outliers appear, we can alleviate releasing the outlier counts on each subspace; releasing subspaces containing many outliers would suffice~\cite{Knorr:1999:FIK:645925.671529}. In this section, we present another mechanism for differential privacy that allows us to learn top-$h$ subspaces that contains outliers. 

  The subspace containing many outliers can be simply found by issuing count queries for each subspace. However, responses obtained with such a procedure can be useless in typical settings.
  Let ${\cal F}_c$ be the set of subspaces spanned by $c$ dimensions. Then, the privacy parameters need to be set as $(\epsilon/|S|, \delta/|S|)$ for each count query so that the entire process achieves $(\epsilon, \delta)$-differentially private because of sequential composition.
  The privacy budget can be saved by applying the more sophisticated composition theorem~\cite{dwork2010boosting}. However, noting that the size of ${\cal F}_c$ can grow exponentially in $d$, it is still difficult to manage high dimensionality.
  
  We consider the problem of the top-$h$ subspace discovery by means of the exponential mechanism, which allows us to avoid releasing outlier counts for each subspace. 
  Let $d$ be the instance dimension and let $E=\{1,2,\cdots, d\}$.
  Then, the set of $c$-dimensional subspaces is denoted as ${\cal F}_c = \{ F_c| F_c \subset E, |F_c|=c  \}$.
  Given $c$ and $E$, the top-$h$ subspace discovery is the problem to find the $h$ subspaces in ${\cal F}_c$ containing the $h$ largest number of outliers.
  The exponential mechanism can be used to release discrete values with achieving differential privacy. We employ the following function as the utility function for the top-$h$ subspace discovery:
   \begin{equation}
    \label{eq:uf-subspace}
     u_{subspace}(X,r,k,S) = \frac{q_{count}(X,k,r,S)}{GS_{q_{count},|S|}^{\rm{UB}}(k)}
   \end{equation}
   where $GS_{q_{count},|S|}^{\rm{UB}}(k)$ denotes the upper bound of the global sensitivity of the count query, as derived by eq. (\ref{GS-eq}) in Section \ref{subsec:qcount-GS}. The following theorem denotes the differential privacy achieved by the exponential mechanism with this utility function.
   \begin{theorem}
    \label{theo:uf-subspace}
    The exponential mechanism with utility function $u_{subspace}$ achieves $2\epsilon$-differential privacy.
   \end{theorem}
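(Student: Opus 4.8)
The plan is to invoke the exponential mechanism guarantee of Theorem~\ref{EM}, which certifies $2\epsilon\Delta u$-differential privacy once the sensitivity $\Delta u$ of the utility function is known. Since the advertised guarantee is exactly $2\epsilon$, the whole argument reduces to showing that the sensitivity of $u_{subspace}$, in the sense of Definition~\ref{delta}, is at most $1$.

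First I would write out $\Delta u_{subspace}$ explicitly. The range $\mathcal{R}$ of the mechanism is the candidate subspace set, so the outer maximum is taken over $S$ and the inner maximum over neighboring pairs $X \sim X'$. Because the normalizing denominator $GS^{\rm UB}_{q_{count},|S|}(k)$ depends only on the dimensionality $|S|$ and not on the database, it is a constant for each fixed $S$ and factors out of the difference, leaving
\begin{align}
  \Delta u_{subspace}
  = \max_S \frac{\max_{X \sim X'}\bigl|q_{count}(X,k,r,S)-q_{count}(X',k,r,S)\bigr|}{GS^{\rm UB}_{q_{count},|S|}(k)}.
\end{align}

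Next I would identify the numerator, for each fixed $S$, as precisely the global sensitivity $GS_{q_{count},|S|}(k)$ of the count query restricted to subspace $S$, following Definition~\ref{GS-defi}. The upper bound of Theorem~\ref{GS-Bound} then gives $GS_{q_{count},|S|}(k) \le GS^{\rm UB}_{q_{count},|S|}(k)$ for every $S$, so each ratio is at most $1$ and hence $\Delta u_{subspace} \le 1$. Substituting $\Delta u = 1$ into Theorem~\ref{EM} yields $2\epsilon$-differential privacy, which completes the argument.

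The step I expect to carry the real content is the normalization observation: it is the matching of the denominator to the dimension-dependent global-sensitivity bound for each individual subspace that makes the ratio uniformly bounded by $1$, independently of $|S|$ and of how many subspaces populate $\mathcal{R}$. Without this per-subspace normalization the sensitivity would inherit the exponentially growing kissing-number bound, and the guarantee would degrade with dimension; the design of $u_{subspace}$ is exactly what neutralizes this, so the part that requires care is verifying that the constant cancels correctly and that the numerator is genuinely the global sensitivity in subspace $S$ rather than some larger quantity.
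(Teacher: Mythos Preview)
Your proposal is correct and follows essentially the same route as the paper: show $\Delta u_{subspace}\le 1$ by factoring out the per-subspace normalizer, recognize the remaining numerator as $GS_{q_{count},|S|}(k)$, bound the ratio by $1$ via Theorem~\ref{GS-Bound}, and then invoke Theorem~\ref{EM}. The additional commentary about why the normalization is the crucial design choice is accurate and matches the paper's motivation.
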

   
   \begin{proof}
    The global sensitivity of utility function $u_{subspace}$ is given as
    \begin{equation}
     \label{ap:uf-subspace}
      \begin{split}
       \Delta u_{subspace} &= \max_{S \in {\cal F}_c} \max_{X,X'\in \mathcal{D}^{n}:H(X,X')=1} ||u_{subspace}(X,k,r,S)-u_{subspace}(X',k,r,S)|| \\
       &= \max_{S \in {\cal F}_c} \max_{X,X'\in \mathcal{D}^{n}:H(X,X')=1} \frac{||q_{count}(X,k,r,S) -q_{count}(X',k,r,S)|| }{GS_{q_{count},|S|}^{\rm{UB}}(k)} \\
       &= \frac{GS_{q_{count},|S|}(k)}{GS_{q_{count},|S|}^{\rm{UB}}(k)} \leq 1.
      \end{split}
    \end{equation}
    Hence, we have $\Delta u_{subspace} \leq 1$. The exponential mechanism with utility function $u_{subspace}$ thus achieves $2\epsilon \Delta u_{subspace}$-differential privacy, which concludes the proof.
   \end{proof}
   
   To obtain the top-$h$ (suspected) subspaces, we need to iterate the exponential mechanism until $h$ different subspaces are found. Algorithm \ref{alg:exp} denotes the entire procedure for the top-$h$ query with $\epsilon$-differential privacy. Therefore, top-$h$ discovery of subspace query could also be adupted. At line 2-4, the utility for each subspaces in ${\cal F}_c$ are evaluated. At line 5-10, $h$ subspaces are chosen by iterative application of exponential mechanism $\epsilon^{\epsilon / h}_{u_{subspace}}$. Finally, the selected subspaces are released. Note that the privacy parameter for the exponential mechanism is set to $\epsilon/h$ so that the entire procedure of the top-$h$ subspace discovery achieves $\epsilon$-differential privacy. 

\begin{algorithm}[t]
 \caption{Mechanism of Top-$h$ Query}    \label{alg:exp}
 \SetKwProg{Fn}{Function}{}{end}
 \SetKwInOut{KwInit}{Initialization}
 \DontPrintSemicolon
 \KwIn{Top-$h$ query $q_h$ and smooth parameter $\epsilon$.}
 \KwOut{The top-$h$ items $R_h$.}
 \KwInit{$R_h \leftarrow \emptyset$}
 \Fn{$q_{h}(R,h,\cdot)$}{
 \For{$r \in R$}{
 calculate the utility of a item $r$ by $u_{h}(r, \cdot)$ \;
 }
 \While{$|R_h|<h$}{
 \Repeat{$r \notin R_h$}{
 $r \leftarrow \epsilon^{\epsilon/h}_{u_{h}, \Delta u_{h}}$ \;
 }
 $R_h \leftarrow \{r\} \cup R_h$ \;
 }
 \Return{$R_h$}
 }
\end{algorithm}
       
 \section{Experiments}
 \label{sec:experiment}
 In this section, we show the empirical evaluation of the utility of the mechanism for counting outliers query, discovery of subspace query and detection of outlier query.

\subsection{Settings}
\label{subsec:settings}

We conducted the experiments on some synthetic and real datasets for Scenario 1 and Scenario 2. As real datasets, we used Adult and Ionosphere datasets chosen from UCI Machine Learning Repository~\cite{Lichman:2013} which are originally prepared for classification tasks. 
For adapting outlier analysis, we carried out the preprocessing to the datasets in the same manner of \cite{DBLP:journals/corr/abs-0903-3257,Pham:2012:NTA:2339530.2339669}.
These datasets were scaled so that the average and variance of each attribute is $0$ and $1$, respectively. For Adult, we removed two categorical attributes, ``category'' and ``fnlwgt''.

The experiments for Scenario 1 were carried out on the two datasets, named Synthetic 1 and Adult 1.
Synthetic 1 consists of $50$ samples of $2$ dimensional real vectors, which contains $45$ inliers and $5$ outliers. The inliers are sampled from $\mathcal{N}(\bm{0},\mathbb{I})$ where $\mathbb{I}$ represents an identity matrix, and the outliers are sampled from $\mathcal{N}(\bm{\mu},\Sigma)$ where $\mu_1 = \mu_2 = 20$, and $\Sigma$ is a diagonal matrix such that $\Sigma_{11} = \Sigma_{22} = 100$.
Adult 1 is a subset of the original Adult dataset which contains $45$ positive labeled samples and $5$ negative labeled samples. The positive labeled samples and the negative labeled samples are treated as inliers and outliers, respectively~(See Table \ref{dataset-parameta1} for the detail).

The experiments for Scenario 2 were conducted on the three datasets, named Synthetic 2, Adult 2 and Ionosphere.
Synthetic 2 consists of $500$ samples of $10$ dimensional real vectors. The dataset contains $490$ inliers sampled from $\mathcal{N}(\bm{0},\mathbb{I})$ and $10$ outliers sampled from $\mathcal{N}(\bm{\mu},\Sigma)$ where $\mu_1 = \mu_2 = 20$, $\mu_i = 0$ for $i = 3...,10$, $\mathbb{I}$ represents an identity matrix, and $\Sigma$ is a diagonal matrix such that $\Sigma_{11} = \Sigma_{22} = 100$ and $\Sigma_{ii} = 1$ for $i = 3...,10$.
Adult 2 and Ionosphere are subsets of the original Adult or Ionosphere datasets which contains $490$ positive labeled samples and $10$ negative labeled samples in Adult 2, and $225$ positive labeled samples and $10$ negative labeled samples in Ionosphere. The treatment of inliers and outliers in real datasets is same as the experiments for Scenario 1~(See Table \ref{dataset-parameta2} for the detail).

\begin{table}[t]
 \centering
 \caption{Sumarry of datasets and parametas for Scenario 1}
 \begin{tabular}{c||c|c} \hline
  & Synthetic 1 & Adult 1 \\ \hline
  The number of outliers & $5$ & $5$\\
  The number of inliers & $45$ & $45$\\
  The number of samples $N$ & $50$ & $50$\\
  Dimension $d$ & $2$ & $7$\\
  Treshold $k$ & $3$ & $3$\\
  Radious $r$ & $1.1$ & $0.35$\\ \hline
 \end{tabular}
\label{dataset-parameta1}
\end{table}
\begin{table}[t]
  \centering
 \caption{Sumarry of datasets and parametas for Scenario 2 and Scenario 3}
 \begin{tabular}{c||c|c|c} \hline
  & Synthetic 2 & Adult 2 & Ionosphere\\ \hline
  The number of outliers & $10$ & $10$ & $10$\\ 
  The number of inliers & $490$ & $369$ & $225$\\
  The number of samples $N$ & $500$ & $379$ & $235$\\
  Dimension $d$ & $10$ & $7$ & $34$\\
  Treshold $k$ & $3$ & $3$ & $3$\\
  Radious $r$ & $0.13$ & $0.02$ & $0.06$\\ \hline
 \end{tabular}
 \label{dataset-parameta2}
\end{table}

\subsection{Count Outliers}
  \label{subsec:experiment-scenario1}
  Following the Scenario 1 described in Section~\ref{subsec:sinario}, we evaluated the utility of the mechanisms of $q_{count}$ on the synthetic dataset.
We changed the privacy parameter from $\epsilon=0.1$ to $0.9$; $\delta$ was fixed as $\delta=0.01$. See Table \ref{dataset-parameta1} for the parameters of the outliers.
We partitioned the instances into two classes: one is ``true'', indicating the instance detected as an outlier; the other is ``false''.
For each dataset, we tuned the radius $r$ so that the $Accuracy$ given by eq.(\ref{accuracy}) is maximized:
  \begin{equation}
   \label{accuracy}
    Accuracy = \frac{TP + TN}{TP + FP + FN + TN},
  \end{equation}
where $TP$, $TN$, $FP$ and $FN$ respectively denote true positive, true negative, false positive, and false negative. For implementation, we used \cite{miniball} to solve the smallest enclosing ball problem.
As the criterion of the utility of the mechanisms, we show the standard deviation of the noise added to the query. We compared the standard deviation of the noise of the mechanism based on the smooth sensitivity upper bound in eq.(\ref{eq:ls-t-upper-bound}) with the mechanism based on the global sensitivity lower bound in eq.(\ref{GS-eq}). Fig.~\ref{synthetic1} and Fig.~\ref{adult1} show true the number of outliers in the database and the standard deviations ($\sigma_{Global}$ and $\sigma_{Smooth}$) of the gaussian for each $\epsilon$. In Fig.~\ref{synthetic1} and Fig.~\ref{adult1}, ``Global'' and ``Smooth'' respectively present the global sensitivity-based mechanism and the smooth sensitivity-based mechanism.
\begin{figure}[t]
 \centering
 \includegraphics[width=150mm]{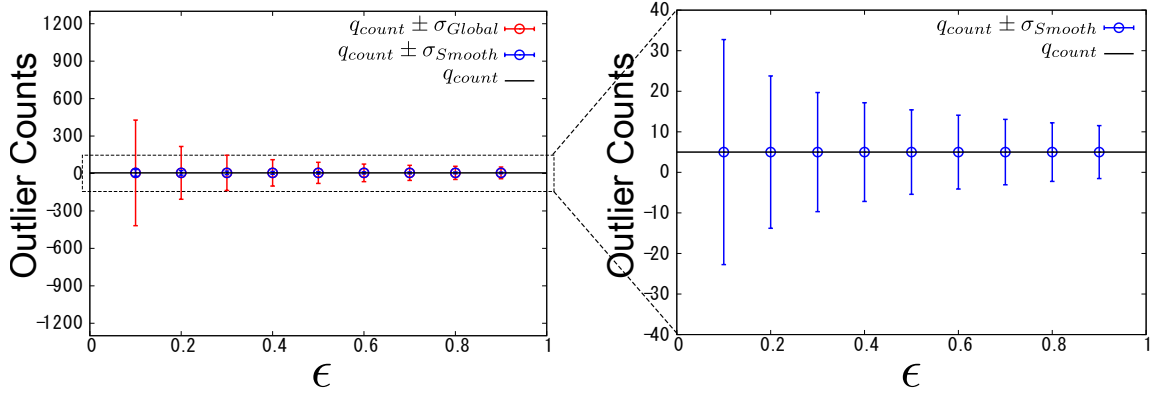}
 \caption{The result of Synthetic 1 on Scenario 1}
 \label{synthetic1}
\end{figure}

\begin{figure}[t]
 \centering
 \includegraphics[width=150mm]{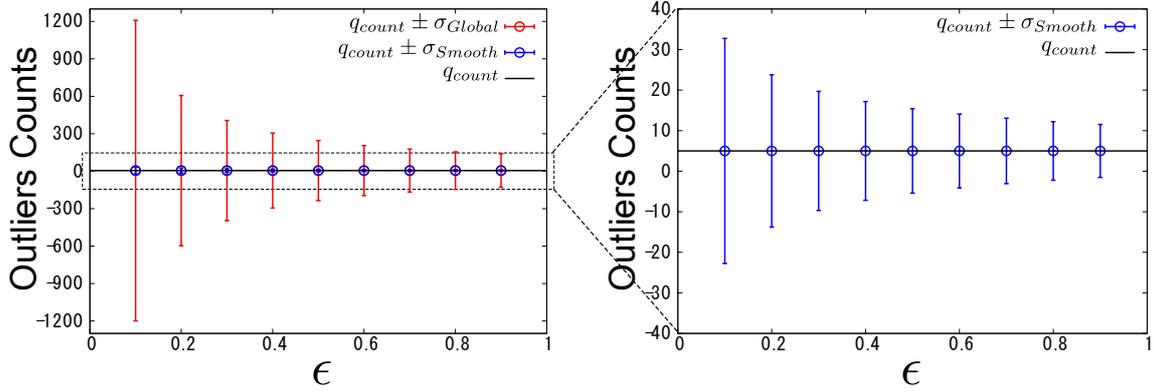}
 \caption{The result of Adult 1 on Scenario 1}
 \label{adult1}
\end{figure}
  It is apparent that the standard deviation of the noise of the smooth sensitivity-based mechanism is significantly lower than that of the global sensitivity-based mechanism. Indeed, the standard deviation of the noise of global sensitivity-based mechanism is approximately 10-30 times larger than that of the smooth sensitivity-based mechanism even though the global sensitivity-based mechanism uses the lower bound. In addition, the smooth sensitivity-based mechanism achieves the noise of which standard deviation is lower than $7$ for $\epsilon \ge 0.7$ for each datasets. The reason why we got these results is our approach depends only on the number of outliers, not on the number of dimensions. From these results, we can conclude that our framework is sufficiently practical in this setting.

\subsection{Top-$h$ Subspace Discovery}
\label{subsec:experiment-scenario2}
The experiments of top-$h$ subspace discovery shown in this subsection follow Scenario 2 of Section \ref{subsec:sinario}. The analyst investigates the subspace contains more outliers using query $q_{subspace}$. In these experiments, the dimensionality of the subspace is set as $1$; the analyst tries to detect $2$ out of $10$ subspaces by top-$h$ Subspace discovery.

For evaluation purposes, we partitioned the subspace into two classes: one is ``true'', indicating the subspace containing outliers; the other is ``false''. The utility of the results is measured from the precision and recall. The precision is evaluated by $precision = \frac{TP}{TP + FP}$, where $TP$ and $FP$ respectively denote true positive and false positive. The recall is evaluated by $recall = \frac{TP}{TP + FN}$, where $FN$ denotes false negative.
The prediction and recall are one thousand times average.
Privacy parameter was varied from $\epsilon=0.2$ to $3.2$.

Fig.~\ref{synthetic2}-\ref{ionosphere} (left) and Fig.~\ref{synthetic2}-\ref{ionosphere} (right) respectively represent the precision and recall, with changing $h$, the number of subspaces detecting. The precision decreases as $h$ grows, as shown in Fig.~\ref{synthetic2}. The recall can be improved with larger $h$ because the probability with which true subspaces are chosen increases. Because of sequential decomposition, the outputs of the exponential mechanism become noisy as $h$ increases. Therefore, the recall can be decreased if the effect of noise is dominant. As Fig.~\ref{synthetic2} shows, the effect of sequential composition was more dominant and smaller $h$ achieved larger recall in this experiments.
However there isn't distinctive subspace that has many outliers. It is difficult to apply top-$h$ subspace discovery when the difference of the number of ouliers are not.

For practical use, the precision and recall are preferred to be much higher than $1/2$. If the number of true subspaces can be known by analysts in advance, then $h$ should be set as small as possible. Privacy parameter $\epsilon$ and utility (precision and recall) share a tradeoff relation. Noting that the objective of outlier analysis is fundamentally conflicting with privacy protection, the choice of larger $\epsilon$, such as $0.8 \leq \epsilon \leq 1.6$, might be allowed.

\begin{figure}[t]
 \begin{minipage}[t]{.5\linewidth}
  \centering
  \includegraphics[width=75mm]{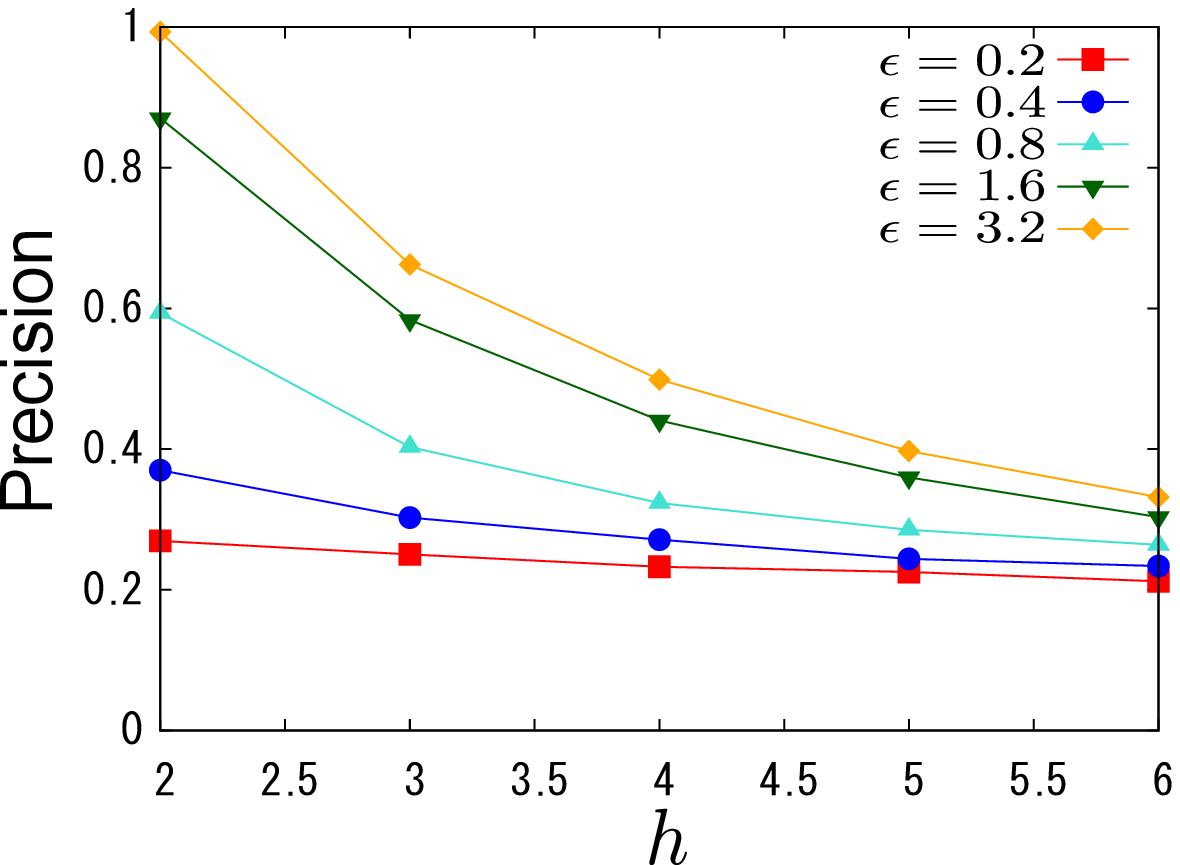}
 \end{minipage}
 \begin{minipage}[t]{.5\linewidth}
  \centering
  \includegraphics[width=75mm]{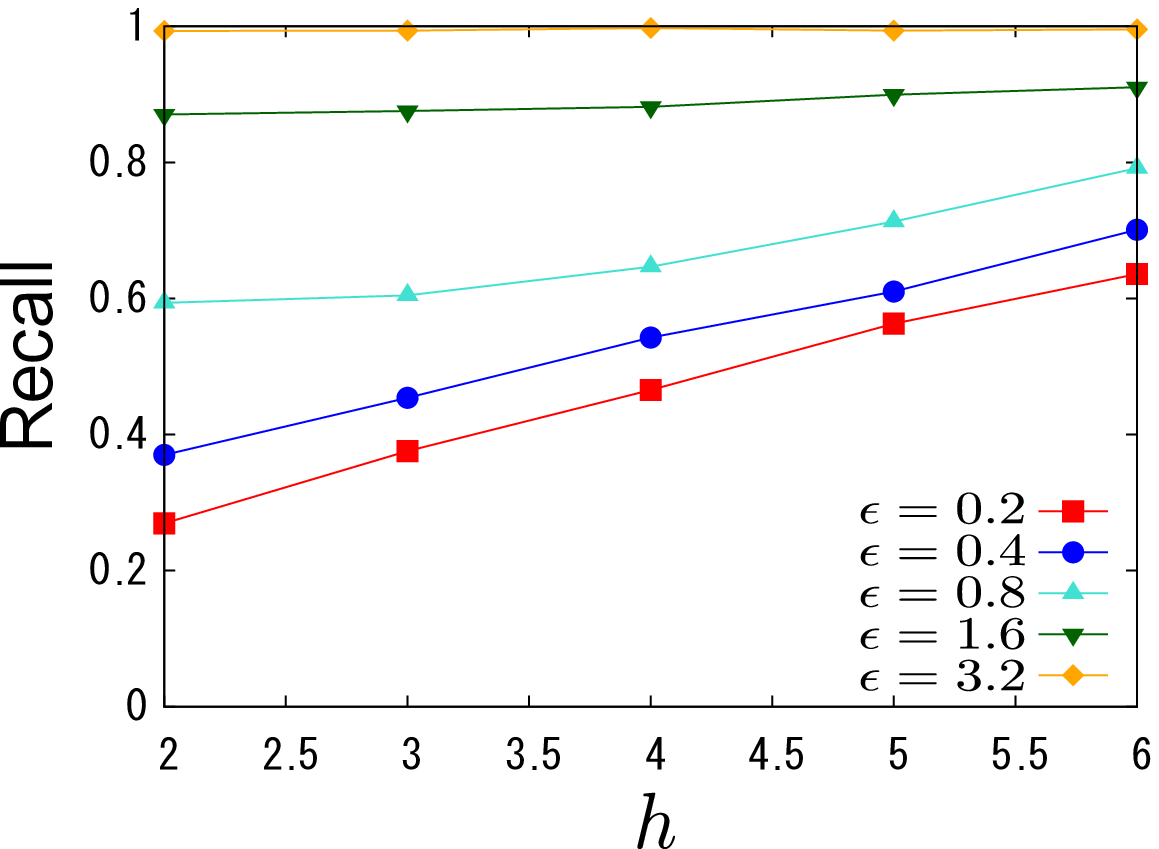}
 \end{minipage}
 \caption{The result of Synthetic 2 on Scenario 2}
  \label{synthetic2}
\end{figure}

\begin{figure}[t]
 \begin{minipage}[t]{.5\linewidth}
  \centering
  \includegraphics[width=75mm]{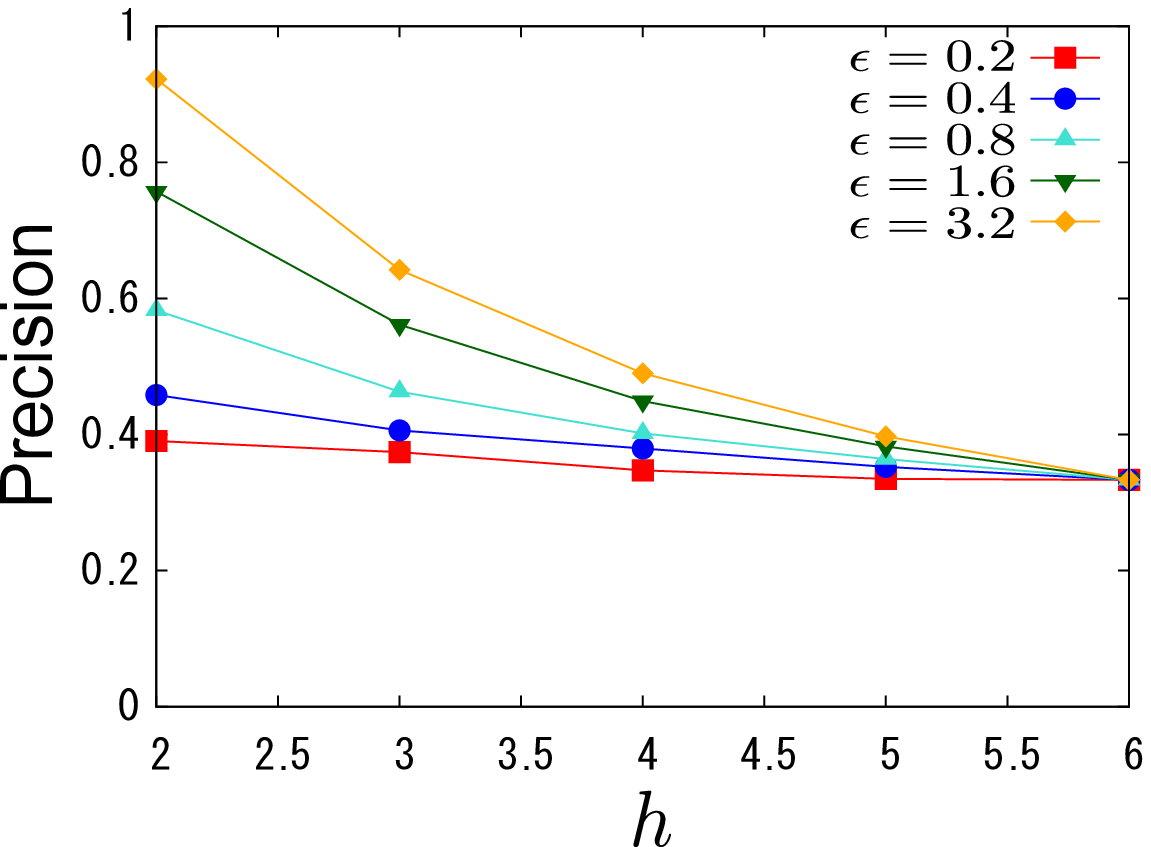}
 \end{minipage}
 \begin{minipage}[t]{.5\linewidth}
  \centering
  \includegraphics[width=75mm]{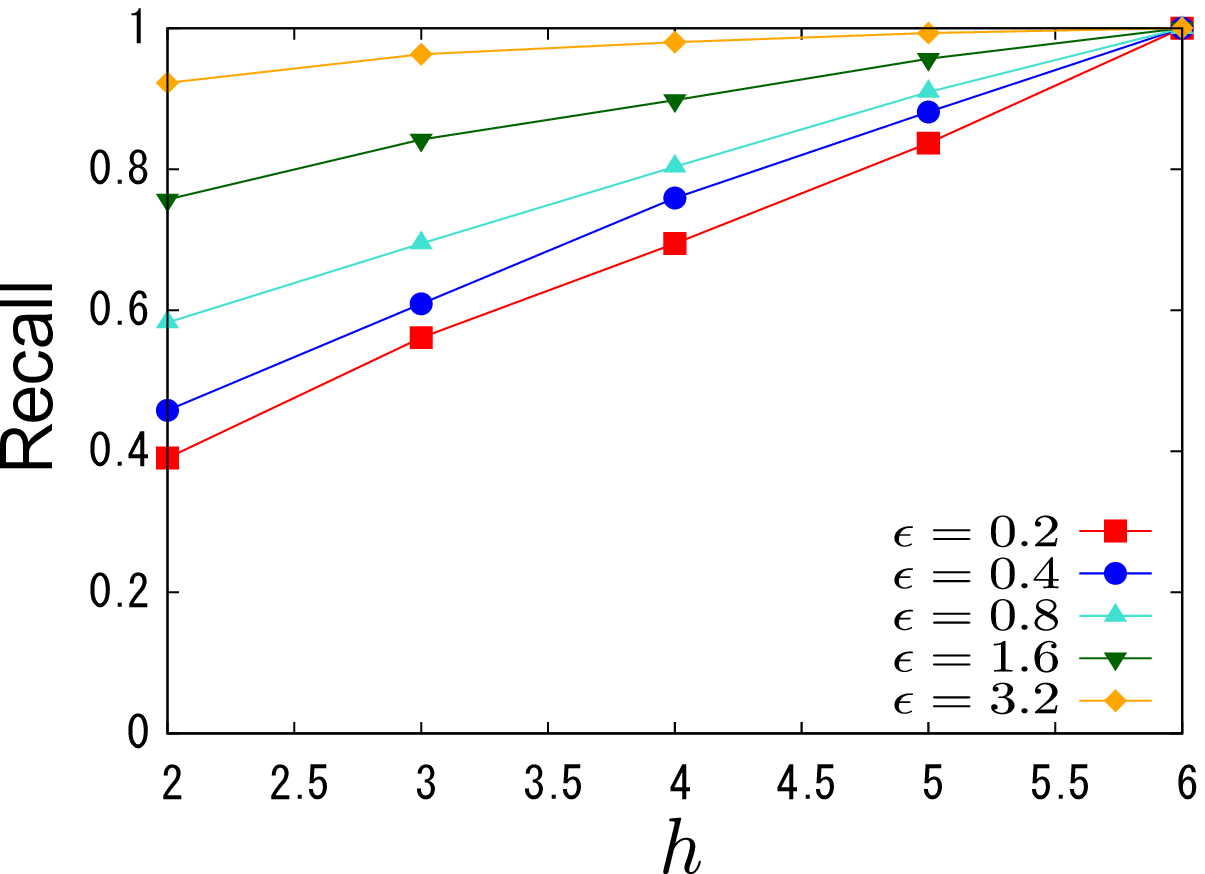}
 \end{minipage}
 \caption{The result of Adult 1 on Scenario 2}
 \label{adult2}
\end{figure}

\begin{figure}[t]
 \begin{minipage}[t]{.5\linewidth}
  \centering
  \includegraphics[width=75mm]{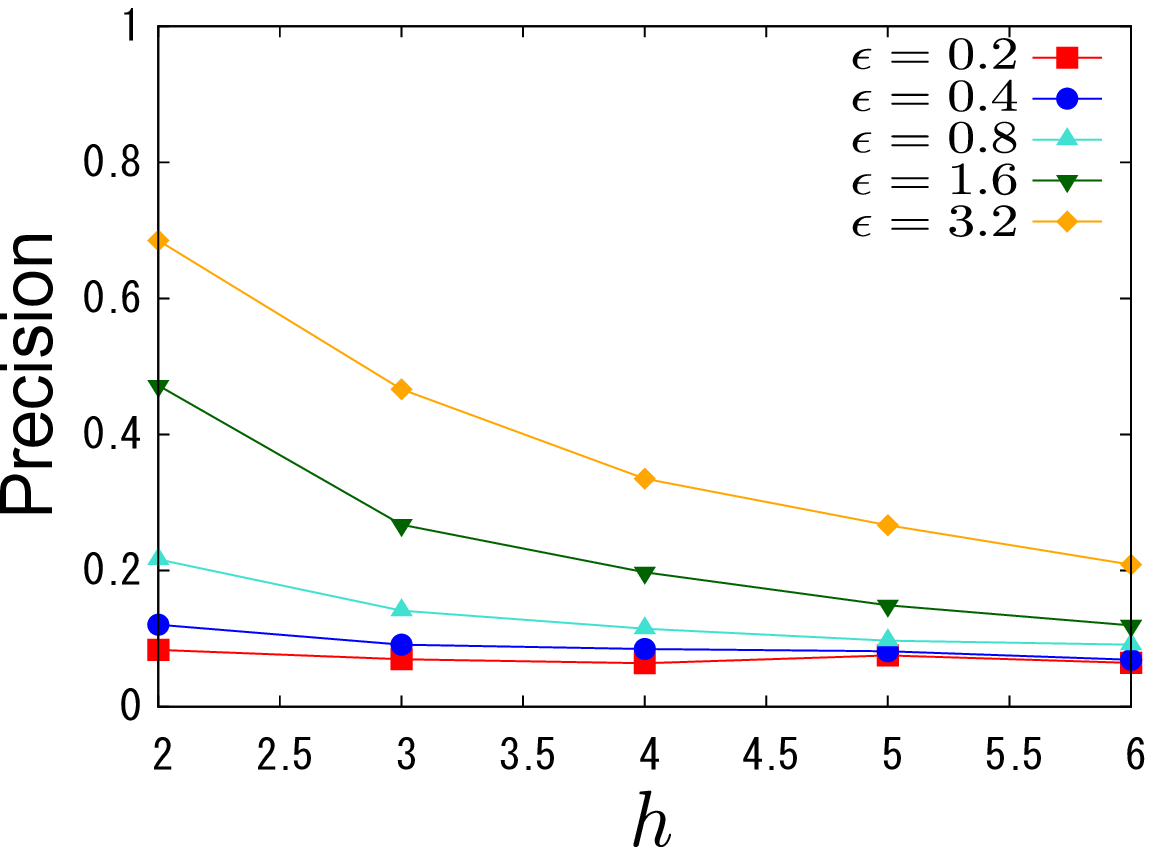}
 \end{minipage}
 \begin{minipage}[t]{.5\linewidth}
  \centering
  \includegraphics[width=75mm]{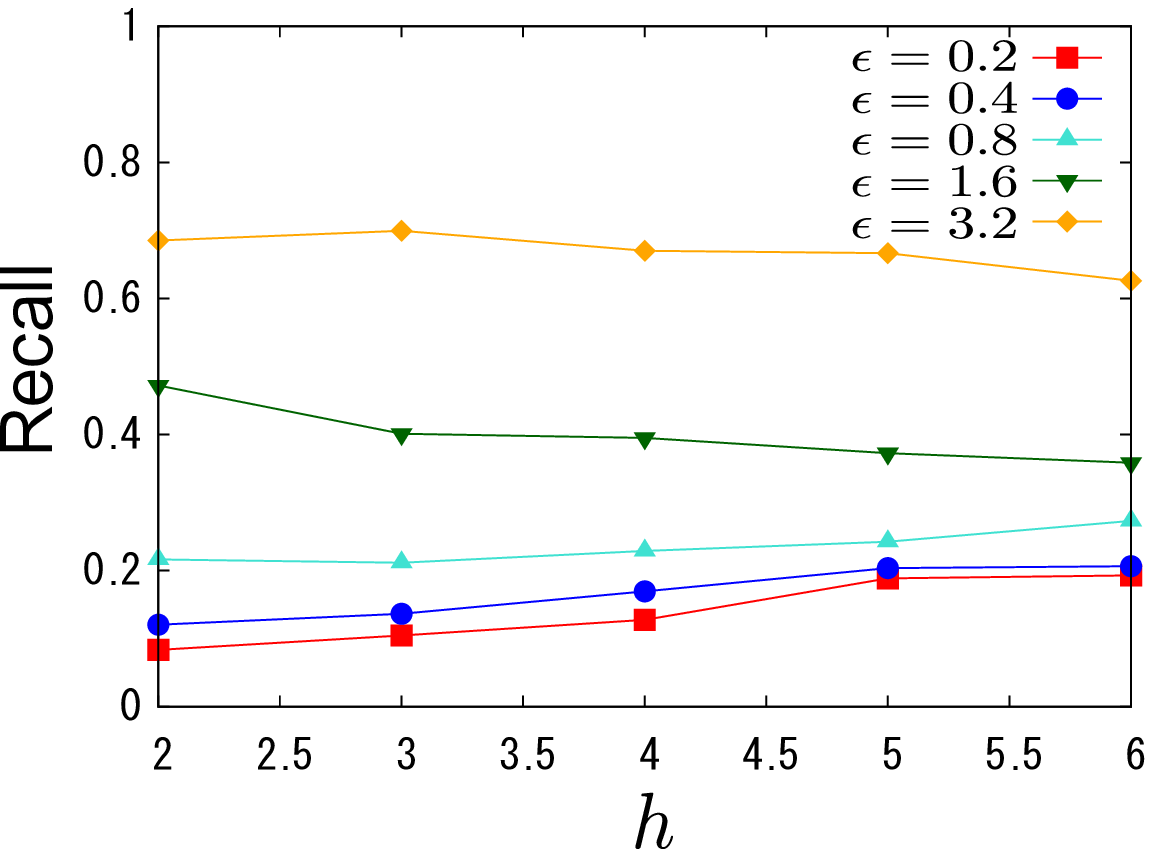}
 \end{minipage}
 \caption{The result of Ionosphere on Scenario 2}
 \label{ionosphere}
\end{figure}

\section{Conclusion}
 In this paper, we present the differentially private distance-based outlier analysis that consists of two different types of queries, the differentially private counting of outliers in given subspace and the differentially private discovery of subspaces.

For the query of counting of outliers, taking advantage of the smooth sensitivity~\cite{Nissim:2007:SSS:1250790.1250803}, the resulting output of the mechanism can be less noisy than that of the global sensitivity based mechanism. Although the evaluation of the smooth upper bound is often costly, we provide an efficient algorithm for evaluation of the smooth upper bound for the problem for outlier counting. This paper describes an initial step towards differentially private outlier analysis, and the experimental evaluation is performed with relatively small-size datasets. In our algorithm, we invoke the smallest enclosing ball algorithm that takes as input the power set of instances. Because of this construction, we need a more efficient algorithm for application to larger size datasets. 

For the query of discovery of subspaces, we employ the exponential mechanism and specifically design a utility function. Even though the variation of subspaces can grow exponentially in the data dimensionality, the proposed mechanism achieves better detection accuracy for high dimensionality.

 \bibliographystyle{unsrt}
 \bibliography{reference}

\end{document}